\documentclass{article}

     \PassOptionsToPackage{numbers, compress}{natbib}



\usepackage[final]{neurips_2023}


\usepackage[utf8]{inputenc} 
\usepackage[T1]{fontenc}    
\usepackage{hyperref}       
\usepackage{url}            
\usepackage{booktabs}       
\usepackage{amsfonts}       
\usepackage{nicefrac}       
\usepackage{microtype}      
\usepackage{xcolor}         

\usepackage{microtype}
\usepackage{graphicx}
\usepackage{caption}
\usepackage{subcaption}
\usepackage{booktabs} 
\setlength{\textfloatsep}{0.2cm}
\setlength{\abovecaptionskip}{0.2cm}
\captionsetup[table]{skip=0.2cm}


\usepackage{amsmath,amsfonts,bm}









\def\Eqref#1{Equation~(\ref{#1})}








\def\1{\bm{1}}










\DeclareMathAlphabet{\mathsfit}{\encodingdefault}{\sfdefault}{m}{sl}
\SetMathAlphabet{\mathsfit}{bold}{\encodingdefault}{\sfdefault}{bx}{n}













\DeclareMathOperator*{\argmin}{arg\,min}

\usepackage{amsmath}
\usepackage{amssymb}
\usepackage{mathtools}
\usepackage{amsthm}

\usepackage[capitalize,noabbrev]{cleveref}



\usepackage{breakurl}

\usepackage{threeparttable}
\usepackage{diagbox}

\usepackage{amsmath,amssymb,mathtools,amsthm}
\usepackage{mathrsfs}
\usepackage{algorithm,algorithmic}
\usepackage{xspace}
\usepackage{bm}
\usepackage{stfloats}
\usepackage{enumerate}
\usepackage{multirow}
\usepackage{arydshln}
\usepackage{enumitem}
\setlength{\dashlinedash}{.4pt}
\setlength{\dashlinegap}{.8pt}
\usepackage{makecell}
\usepackage{wrapfig}
\usepackage{lipsum}
\usepackage{textcomp}
\usepackage[binary-units]{siunitx}

\newcommand{\diag}{\text{diag}\xspace}
\newcommand{\tabincell}[2]{\begin{tabular}{@{}#1@{}}#2\end{tabular}}

\newtheorem{lemma}{Lemma}
\newtheorem{theorem}{Theorem}
\newtheorem{corollary}{Corollary}

\title{AGD: an Auto-switchable Optimizer using Stepwise Gradient Difference for Preconditioning Matrix}

%

\author{%
	Yun Yue \thanks{Co-first authors with equal contributions.}\\
	Ant Group \\
	Hangzhou, Zhejiang, China \\
	\texttt{yueyun.yy@antgroup.com} \\
	\And
	Zhiling Ye \textsuperscript{$\ast$}\\
	Ant Group \\
	Hangzhou, Zhejiang, China \\
	\texttt{yezhiling.yzl@antgroup.com} \\
	\And
	Jiadi Jiang \textsuperscript{$\ast$}\\
	Ant Group \\
	Hangzhou, Zhejiang, China \\
	\texttt{jiadi.jjd@antgroup.com} \\
	\And
	Yongchao Liu \\
	Ant Group \\
	Hangzhou, Zhejiang, China \\
	\texttt{yongchao.ly@antgroup.com} \\
	\And
	Ke Zhang \\
	Ant Group \\
	Beijing, China \\
	\texttt{yingzi.zk@antgroup.com} \\
}

\begin{document}

	\maketitle

	\begin{abstract}
		Adaptive optimizers, such as Adam, have achieved remarkable success in deep learning. A key component of these optimizers is the so-called preconditioning matrix, providing enhanced gradient information and regulating the step size of each gradient direction. In this paper, we propose a novel approach to designing the preconditioning matrix by utilizing the gradient difference between two successive steps as the diagonal elements. These diagonal elements are closely related to the Hessian and can be perceived as an approximation of the inner product between the Hessian row vectors and difference of the adjacent parameter vectors. Additionally, we introduce an auto-switching function that enables the preconditioning matrix to switch dynamically between Stochastic Gradient Descent (SGD) and the adaptive optimizer. Based on these two techniques, we develop a new optimizer named AGD that enhances the generalization performance. We evaluate AGD on public datasets of Natural Language Processing (NLP), Computer Vision (CV), and Recommendation Systems (RecSys). Our experimental results demonstrate that AGD outperforms the state-of-the-art (SOTA) optimizers, achieving highly competitive or significantly better predictive performance. Furthermore, we analyze how AGD is able to switch automatically between SGD and the adaptive optimizer and its actual effects on various scenarios. The code is available at this link\footnote{\url{https://github.com/intelligent-machine-learning/atorch/tree/main/atorch/optimizers}}.
	\end{abstract}

	\graphicspath{{figures}}

	\section{Introduction}
Consider the following empirical risk minimization problems:
\begin{equation}
	\min_{\bm{w} \in \mathbb{R}^n} f(\bm{w}) := \frac{1}{M} \sum_{k=1}^{M} \ell(\bm{w}; \bm{x}_k),
	\label{eq:obj}
\end{equation}
where $\bm{w} \in \mathbb{R}^n$ is the parameter vector to optimize, $\{\bm{x}_1, \dots, \bm{x}_M\}$ is the training set, and $\ell(\bm{w}; \bm{x})$ is the loss function measuring the predictive performance of the parameter $\bm{w}$ on the example $\bm{x}$.
Since it is expensive to calculate the full batch gradient in each optimization iteration when $M$ is large, the standard approach is to adopt a mini-batched stochastic gradient, i.e.,
\begin{equation*}
	\bm{g}(\bm{w}) = \frac{1}{|\mathcal{B}|} \sum_{k\in\mathcal{B}} \nabla \ell(\bm{w}; \bm{x}_k),
\end{equation*}
where $\mathcal{B} \subset \{1, \dots, M\}$ is the sample set of size $|\mathcal{B}| \ll M$. Obviously, we have $\mathbf{E}_{p(\bm{x})}[\bm{g}(\bm{w})] = \nabla f(\bm{w})$ where $p(\bm{x})$ is the distribution of the training data. \Eqref{eq:obj} is usually solved iteratively.
Assume $\bm{w}_t$ is already known and let $\Delta{}\bm{w} =\bm{w}_{t+1} - \bm{w}_t$, we have
\begin{equation}
	\begin{aligned}
		&\argmin_{\bm{w}_{t+1}\in\mathbb{R}^n}f(\bm{w}_{t+1}) = \argmin_{\Delta{}\bm{w}\in\mathbb{R}^n}f(\Delta{}\bm{w} + \bm{w}_t) \\
		\approx & \argmin_{\Delta{}\bm{w}\in\mathbb{R}^n} f(\bm{w}_t) + (\Delta \bm{w})^T\nabla{}f(\bm{w}_t) + \frac{1}{2}(\Delta{}\bm{w})^T\nabla^{2}f(\bm{w}_t)\Delta{}\bm{w} \\
		\approx & \argmin_{\Delta{}\bm{w}\in\mathbb{R}^n} f(\bm{w}_t) + (\Delta \bm{w})^T\bm{m}_t + \frac{1}{2\alpha_{t}}(\Delta{}\bm{w})^T B_t \Delta{}\bm{w},
	\end{aligned}
	\label{eq:taylor_expansion}
\end{equation}
 where the first approximation is from Taylor expansion, and the second approximation are from $\bm{m}_t\approx\nabla{}f(\bm{w}_t)$ ($\bm{m}_t$ denotes the weighted average of gradient $\bm{g}_t$) and $\alpha_{t}\approx \frac{(\Delta{}\bm{w})^T B_t \Delta{}\bm{w}}{(\Delta{}\bm{w})^T\nabla^{2}f(\bm{w}_t)\Delta{}\bm{w}}$ ($\alpha_{t}$ denotes the step size). By solving ~\Eqref{eq:taylor_expansion}, the general update formula is
\begin{equation}
	\bm{w}_{t+1} = \bm{w}_t - \alpha_t B_t^{-1} \bm{m}_t, \quad t\in \left\{1, 2, \dots, T\right\},
	\label{general_formula}
\end{equation}
where $B_t$ is the so-called preconditioning matrix that
 adjusts updated velocity of variable $\bm{w}_t$ in each direction.
The majority of gradient descent algorithms can be succinctly encapsulated by \Eqref{general_formula}, ranging from the conventional second order optimizer, Gauss-Newton method, to the standard first-order optimizer, SGD, via different combinations of $B_t$ and $\bm{m}_t$. Table~\ref{tab:bt} summarizes different implementations of popular optimizers.

\begin{wraptable}{R}{0.6\textwidth}
	\vspace{-12pt}
	\caption{
		Different optimizers by choosing different $B_t$.
	}
	\centering
	\label{tab:bt}
	\centering
	\setlength{\tabcolsep}{3pt} 
	\renewcommand{\arraystretch}{3.0}
	{ \fontsize{8.3}{3}\selectfont{
			\begin{tabular}{ll}
				\toprule
				\bm{$B_t$} & \textbf{Optimizer} \\
				\midrule
				$B_t = \mathbf{H}$ & \textsc{Gauss-Hessian} \\
				\hdashline\noalign{\vskip 0.5ex}
				$B_t \approx \mathbf{H}$ & \textsc{BFGS} \citep{bfgs1, bfgs2, bfgs3, bfgs4}, \textsc{LBFGS} \citep{lbfgs} \\
				\hdashline\noalign{\vskip 0.5ex}
				$B_t \approx \diag(\mathbf{H})$ & \textsc{AdaHessian} \citep{adahessian} \\
				\hdashline\noalign{\vskip 0.5ex}
				$B_t = \mathbf{F}$ & \textsc{Natural Gradient} \citep{ng} \\
				\hdashline\noalign{\vskip 0.5ex}
				$B_t^2 \approx \mathbf{F}_{emp}$ & \textsc{Shampoo} \citep{shampoo} \\
				\hdashline\noalign{\vskip 0.5ex}
				$B_t^2 \approx \diag(\mathbf{F}_{emp})$ & \tabincell{l}{\textsc{Adagrad} \citep{adagrad},
					\textsc{AdaDelta} \citep{adadelta}, \\
					\textsc{Adam} \citep{adam}, \textsc{AdamW} \citep{adamw}, \textsc{AMSGrad} \citep{amsgrad}} \\
				\hdashline\noalign{\vskip 0.5ex}
				$B_t^2 \approx \diag(\mathbf{Var}(\bm{g}_t))$ & \textsc{AdaBelief} \citep{adabelief} \\
				\hdashline\noalign{\vskip 0.5ex}
				$B_t = \mathbb{I}$ & \tabincell{l}{\textsc{SGD} \citep{sgd},
					\textsc{Momentum} \citep{momentum2}} \\
				\bottomrule
			\end{tabular}
			\begin{tablenotes}
				\centering
				\scriptsize
				\item[*] $\mathbf{H}$ is the Hessian. $\mathbf{F}$ is the Fisher information matrix. $\mathbf{F}_{emp}$ is the empirical Fisher information matrix.
			\end{tablenotes}
	}}
\end{wraptable}
Intuitively, the closer $B_t$ approximates the Hessian, the faster convergence rate the optimizer can achieve in terms of number of iterations, since the Gauss-Hessian method enjoys a quadratic rate, whereas the gradient descent converges linearly under certain conditions (Theorems 1.2.4, 1.2.5 in \citet{convex_optimization}).
However, computing the Hessian is computationally expensive for large models. Thus, it is essential to strike a balance between the degree of Hessian approximation and  computational efficiency when designing the preconditioning matrix.

In this paper, we propose the AGD (\textbf{A}uto-switchable optimizer with \textbf{G}radient \textbf{D}ifference of adjacent steps) optimizer based on the idea of efficiently and effectively acquiring the information of the Hessian. The diagonal entries of AGD's preconditioning matrix are computed as the difference of gradients between two successive iterations,
 serving as an approximation of the inner product between the Hessian row vectors and difference of parameter vectors. In addition, AGD is equipped with an adaptive switching mechanism that automatically toggles its preconditioning matrix between SGD and the adaptive optimizer, governed by a threshold hyperparameter $\delta$ which enables AGD adaptive to various scenarios.
 Our contributions can be summarized as follows.
\begin{itemize}[leftmargin=10pt]
	\item We present a novel optimizer called AGD, which efficiently and effectively integrates the information of the Hessian into the preconditioning matrix and switches dynamically between SGD and the adaptive optimizer. We establish theoretical results of convergence guarantees for both non-convex and convex stochastic settings.
	\item We validate AGD on six public datasets: two from NLP (IWSLT14 \citep{cettolo-etal-2014-report} and PTB \citep{treebank}), two from CV (Cifar10 \citep{cifar10} and ImageNet \citep{ILSVRC15}), and the rest two from RecSys (Criteo \citep{criteo} and Avazu \citep{avazu}). The experimental results suggest that AGD is on par with or outperforms the SOTA optimizers.
	\item We analyze how AGD is able to switch automatically between SGD and the adaptive optimizer, and assess the effect of hyperparameter $\delta$ which controls the auto-switch process in different scenarios.
\end{itemize}

\subsection*{Notation}
We use lowercase letters to denote scalars, boldface lowercase to denote vectors, and uppercase letters to denote matrices. We employ subscripts to denote a sequence of vectors, e.g., $\bm{x}_1, \dots, \bm{x}_t$ where $t \in [T] := \left\{1, 2, \dots, T\right\}$, and one additional subscript is used for specific entry of a vector, e.g., $x_{t,i}$ denotes $i$-th element of $\bm{x}_t$.
For any vectors $\bm{x}, \bm{y}\in\mathbb{R}^n$, we write $\bm{x}^{T}\bm{y}$ or $\bm{x}\cdot \bm{y}$ for the standard inner product, $\bm{x}\bm{y}$ for element-wise multiplication, $\bm{x}/\bm{y}$ for element-wise division, $\sqrt{\bm{x}}$ for element-wise square root, $\bm{x}^2$ for element-wise square, and $\max(\bm{x}, \bm{y})$ for element-wise maximum. Similarly, any operator performed between a vector $\bm{x}\in\mathbb{R}^n$ and a scalar $c \in \mathbb{R}$, such as $\max(\bm{x}, c)$, is also element-wise. We denote $\|\bm{x}\| = \|\bm{x}\|_2 = \sqrt{\left<\bm{x}, \bm{x}\right>}$ for the standard Euclidean norm, $\|\bm{x}\|_1 = \sum_{i}|x_{i}|$ for the $\ell_1$ norm, and $\|\bm{x}\|_{\infty} = \max_{i}|x_{i}|$ for the $\ell_{\infty}$-norm, where $x_{i}$ is the $i$-th element of $\bm{x}$.

Let $f_t(\bm{w})$ be the loss function of the model at $t$-step where $\bm{w}\in\mathbb{R}^n$. We consider $\bm{m}_t$ as Exponential Moving Averages (EMA) of $\bm{g}_t$ throughout this paper, i.e.,
\begin{equation}
	\begin{aligned}
		\bm{m}_t  &= \beta_1{}\bm{m}_{t -1} + (1 - \beta_1) \bm{g}_t = (1 - \beta_1) \sum_{i=1}^{t} \bm{g}_{t-i+1} \beta_1^{i-1},\ t\geq 1,
	\end{aligned}
	\label{eq:mt}
\end{equation}
where $\beta_1\in [0, 1)$ is the exponential decay rate.

\section{Related work}
\label{sec:rel_work}
\textsc{ASGD}~\cite{asgd} leverages Taylor expansion to estimate the gradient at the global step in situations where the local worker's gradient is delayed, by analyzing the relationship between the gradient difference and Hessian. To approximate the Hessian, the authors utilize the diagonal elements of empirical Fisher information due to the high computational and spatial overhead of Hessian. \textsc{AdaBelief} \cite{adabelief} employs the EMA of the gradient as the predicted gradient and adapts the step size by scaling it with the difference between predicted and observed gradients, which can be considered as the variance of the gradient.

Hybrid optimization methods, including \textsc{AdaBound} \citep{adabound} and \textsc{SWATS} \citep{adam2sgd}, have been proposed to enhance generalization performance by switching an adaptive optimizer to \textsc{SGD}. \textsc{AdaBound} utilizes learning rate clipping on \textsc{Adam}, with upper and lower bounds that are non-increasing and non-decreasing functions, respectively. One can show that it ultimately converges to the learning rate of \textsc{SGD}. Similarly, \textsc{SWATS} also employs the clipping method, but with constant upper and lower bounds.

	\section{Algorithm}
\label{sec:2}
\subsection{Details of AGD optimizer}
\label{subsec:2.1}
\begin{figure}[!htbp]
	\begin{minipage}[t]{0.54\textwidth}
		\vspace{-12pt}
		\begin{algorithm}[H]
			\caption{AGD}
			\label{alg:AGD}
			\begin{algorithmic}[1]
				\STATE {\bfseries Input:} parameters $\beta_{1}$, $\beta_{2}$, $\delta$,
				$\bm{w}_1 \in \mathbb{R}^n$, step size $\alpha_t$, initialize $\bm{m}_0=\mathbf{0}, \bm{b}_0=\mathbf{0}$
				\FOR{$t=1$ {\bfseries to} $T$}
				\STATE $\bm{g}_t = \nabla f_t(\bm{w}_t)$
				\STATE $\bm{m}_t \leftarrow \beta_{1} \bm{m}_{t-1} + (1 - \beta_{1})\bm{g}_{t}$
				\STATE $\bm{s}_t = \left\{
				\begin{array}{ll}
					\frac{\bm{m}_1}{1 - \beta_1} & t = 1\\
					\frac{\bm{m}_t}{1 - \beta_1^t} - \frac{\bm{m}_{t-1}}{1 - \beta_1^{t - 1}} & t > 1
				\end{array} \right.$
				\STATE $\bm{b}_t \leftarrow \beta_{2}\bm{b}_{t - 1} + (1 - \beta_{2}) \bm{s}_t^2$
				\STATE $\bm{w}_{t+1} = \bm{w}_t - \alpha_t\frac{\sqrt{1 - \beta_{2}^t}}{1 - \beta_{1}^t} \frac{\bm{m}_t}{\max(\sqrt{\bm{b}_t}, \delta\sqrt{1 - \beta_{2}^t})}$
				\ENDFOR
			\end{algorithmic}
		\end{algorithm}
	\end{minipage}
	\hspace{0.25cm}
	\begin{minipage}[t]{0.44\textwidth}
		   \vspace{0pt}
			\centering
			\includegraphics[width=0.75\linewidth]{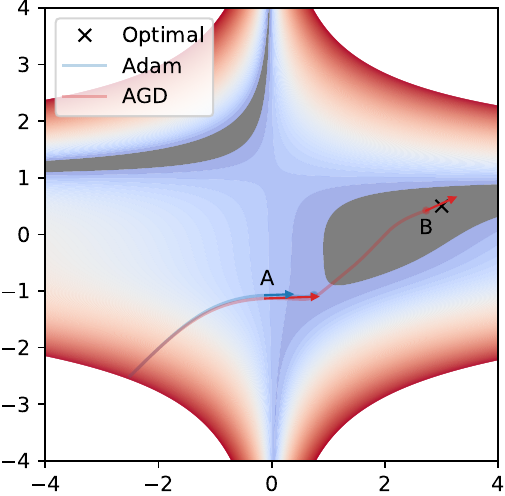}
			\caption{Trajectories of AGD and Adam in the Beale function.}
			\label{fig:illustration}
	\end{minipage}
\end{figure}

Algorithm~\ref{alg:AGD} summarizes our AGD algorithm.
The design of AGD comes from two parts: gradient difference and auto switch for faster convergence and better generalization performance across tasks.

\subparagraph{Gradient difference} Our motivation stems from how to efficiently and effectively integrate the information of the Hessian into the preconditioning matrix. Let $\Delta \bm{w} = \bm{w}_t - \bm{w}_{t - 1}$ and $\nabla_{i}f$ denote the $i$-th element of $\nabla{}f$. From Taylor expansion or the mean value theorem, when $\|\Delta{}\bm{w}\|$ is small, we have the following approximation,
\begin{equation*}
	\nabla_{i} f(\bm{w}_t) - \nabla_{i} f(\bm{w}_{t-1}) \approx \nabla\nabla_{i}f(\bm{w}_t)\cdot\Delta{}\bm{w}.
\end{equation*}

It means the difference of gradients between adjacent steps can be an approximation of the inner product between the Hessian row vectors and the difference of two successive parameter vectors.
To illustrate the effectiveness of gradient difference in utilizing Hessian information, we compare the convergence trajectories of AGD and Adam on the Beale function. As shown in Figure~\ref{fig:illustration}, we see that AGD converges much faster than Adam; when AGD reaches the optimal point, Adam has only covered about half of the distance. We select the two most representative points on the AGD trajectory in the figure, the maximum and minimum points of $\|\nabla f\|_1 / \|\diag(H)\|_1$, to illustrate how AGD accelerates convergence by utilizing Hessian information. At the maximum point (A), where the gradient is relatively large and the curvature is relatively small ($\|\nabla f\|_1 = 22.3$, $\|\diag(H)\|_1 = 25.3$), the step size of AGD is 1.89 times that of Adam. At the minimum point (B), where the gradient is relatively small and the curvature is relatively large ($\|\nabla f\|_1 = 0.2$, $\|\diag(H)\|_1 = 34.8$), the step size decreases to prevent it from missing the optimal point during the final convergence phase.


To approximate $\nabla{}f(\bm{w}_t)$, we utilize $\bm{m}_t / (1 - \beta_1^t)$ instead of $\bm{g}_t$, as the former provides an unbiased estimation of $\nabla{}f(\bm{w}_t)$ with lower variance. According to \citet{adam}, we have $\mathbf{E}\left[\frac{\bm{m}_t}{1 - \beta_1^t}\right] \approxeq \mathbf{E}[\bm{g}_t]$, where the equality is satisfied if $\{\bm{g}_t\}$ is stationary. Additionally, assuming $\{\bm{g}_t\}$ is strictly stationary and $\mathbf{Cov}(\bm{g}_i, \bm{g}_j) = 0$ if $i\neq j$ for simplicity and $\beta_1\in(0, 1)$, we observe that

\begin{equation*}
	\begin{aligned}
		\mathbf{Var}\left[\frac{\bm{m}_t}{1-\beta_1^t}\right] &= \frac{1}{(1 - \beta_1^t)^2}\mathbf{Var}\left[(1-\beta_1)\sum_{i=1}^{t} \beta_1^{t-i} \bm{g}_i \right] = \frac{(1+\beta_1^t)(1-\beta_1)}{(1-\beta_1^t)(1+\beta_1)}\mathbf{Var}[\bm{g}_t] < \mathbf{Var}[\bm{g}_t].
	\end{aligned}
\end{equation*}

Now, we denote
\begin{equation*}
	\bm{s}_t = \left\{
	\begin{array}{ll}
		\bm{m}_1 / (1 - \beta_1) & t = 1,\\
		\bm{m}_t / (1 - \beta_1^t) - \bm{m}_{t-1} / (1 - \beta_1^{t - 1}) & t > 1,
	\end{array} \right.
\end{equation*}
and design the preconditioning matrix $B_t$ satisfying
$$B_t^2 = \diag(\text{EMA}(\bm{s}_1{}\bm{s}_1^T, \bm{s}_2{}\bm{s}_2^T, \cdots, \bm{s}_t{}\bm{s}_t^T)) / (1 - \beta_{2}^t),$$
where $\beta_{2}$ represents the parameter of EMA and bias correction is achieved via the denominator.



Note that previous research, such as the one discussed in Section~\ref{sec:rel_work} by \citet{asgd}, has acknowledged the correlation between the difference of two adjacent gradients and the Hessian. However, the key difference is that they did not employ this relationship to construct an optimizer. In contrast, our approach presented in this paper leverages this relationship to develop an optimizer, and its effectiveness has been validated in the experiments detailed in Section~\ref{sec:4}.

\subparagraph{Auto switch}

Typically a small value is added to $B_t$ for numerical stability, resulting in $B_t + \epsilon\mathbb{I}$.
However, in this work we propose to replace this with $\max(B_t, \delta\mathbb{I})$, where we use a different notation $\delta$ to emphasize its crucial role in auto-switch mechanism. In contrast to $\epsilon$, $\delta$ can be a relatively large (such as 1e-2).
If the element of $\hat{\bm{b}}_t := \sqrt{\bm{b}_t / (1 - \beta_{2}^t)}$ exceeds $\delta$, AGD (Line 7 of Algorithm~\ref{alg:AGD}) takes a confident adaptive step. Otherwise, the update is performed using EMA, i.e., $\bm{m}_t$, with a constant scale of ${\alpha_t}/({1-\beta_1^t})$, similar to SGD with momentum. It's worth noting that, AGD can automatically switch modes on a per-parameter basis as the training progresses.

Compared to the commonly used additive method, AGD effectively eliminates the noise generated by $\epsilon$ during adaptive updates. In addition, AGD offers an inherent advantage of being able to generalize across different tasks by tuning the value of $\delta$, obviating the need for empirical choices among a plethora of optimizers.

\subsection{Comparison with other optimizers}

\subparagraph{Comparison with AdaBound}
As noted in Section~\ref{sec:rel_work}, the auto-switch bears similarities to AdaBound \cite{adabound} in its objective to enhance the generalization performance by switching to SGD using the clipping method. Nonetheless, the auto-switch's design differs significantly from AdaBound. Rather than relying solely on adaptive optimization in the early stages, AGD has the flexibility to switch seamlessly between stochastic and adaptive methods, as we will demonstrate in Section~\ref{subsec:delta}. In addition, AGD outperforms AdaBound's across various tasks, as we will show in Appendix~\ref{appendix:adabound}.

\subparagraph{Comparison with AdaBelief}
\begin{wrapfigure}{l}{0.5\textwidth}
	\vspace{-25pt}
	\begin{minipage}[t]{\linewidth}
		\vspace{0pt}
		\centering
		\includegraphics[width=0.92\linewidth]{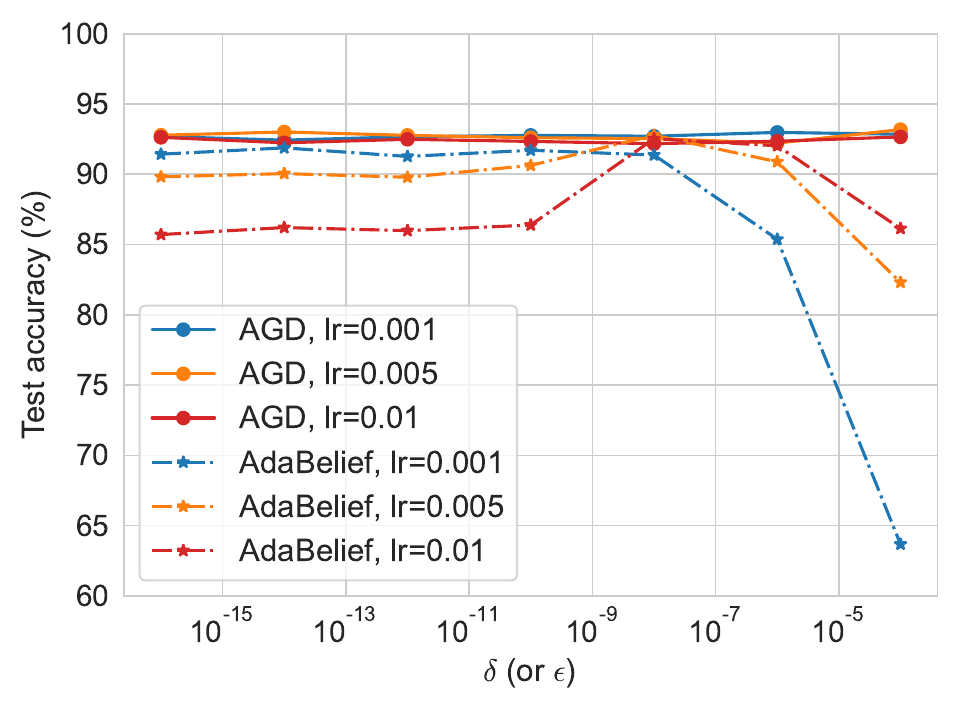}
		\caption{Comparison of stability between AGD and AdaBelief relative to the parameter $\delta~(\mbox{or}~\epsilon)$ for ResNet32 on Cifar10.
			AGD shows better stability over a wide range of $\delta~(\mbox{or}~\epsilon)$ variations than AdaBelief.}
		\label{fig:sensitivity-analysis}
	\end{minipage}
	\vspace{-25pt}
\end{wrapfigure}

While in principle our design is fundamentally different from that of AdaBelief \cite{adabelief}, which approximates gradient variance with its preconditioning matrix, we do see some similarities in our final forms.
Compared with the denominator of AGD, the denominator of AdaBelief $\bm{s}_t = \bm{g}_t - \bm{m}_t = \frac{\beta_{1}}{1 - \beta_{1}}(\bm{m}_t - \bm{m}_{t-1})$
lacks bias correction for the subtracted terms and includes a multiplication factor of $\frac{\beta_{1}}{1-\beta_{1}}$.
In addition, we observe that AGD exhibits superior stability compared to AdaBelief.
As shown in Figure~\ref{fig:sensitivity-analysis}, when the value of $\epsilon$ deviates from 1e-8 by orders of magnitude, the performance of AdaBelief degrades significantly;
in contrast, AGD maintains good stability over a wide range of $\delta$ variations.

\subsection{Numerical analysis}
\label{subsec:num}

\begin{wrapfigure}{R}{0.7\textwidth}
	\vspace{-10pt}
	\begin{minipage}{\linewidth}
		\centering
		\subcaptionbox{$f(x,y)$}[.32\textwidth]{
			\includegraphics[width=\linewidth]{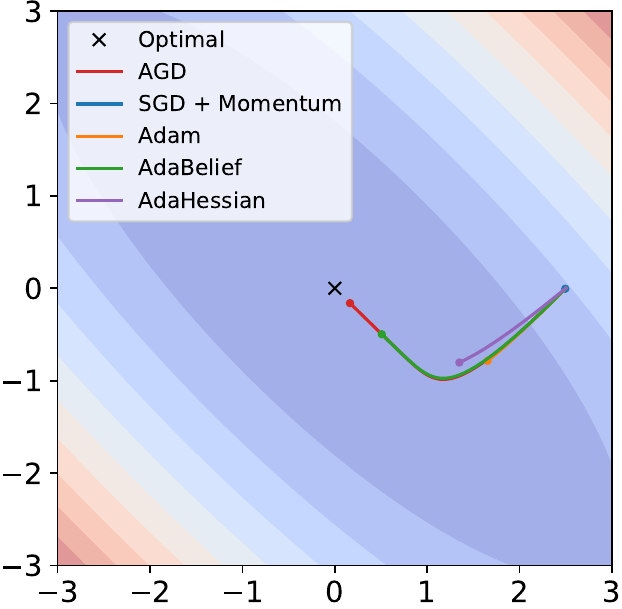}
		}
		\subcaptionbox{Beale}[.32\textwidth]{
			\includegraphics[width=\linewidth]{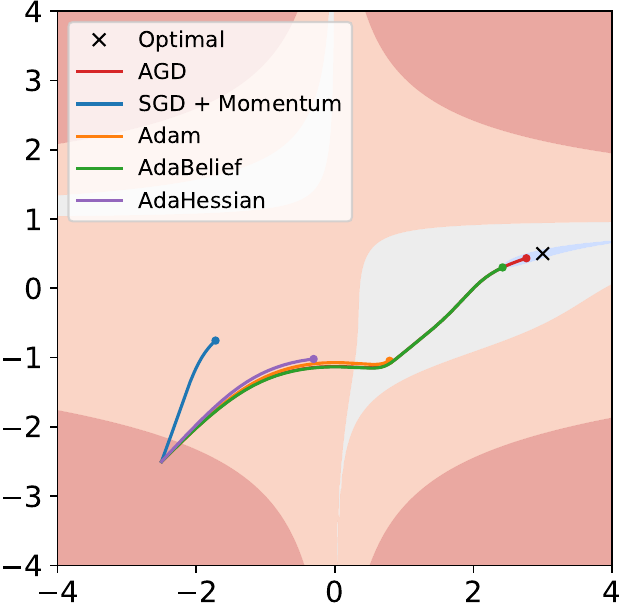}
		}
		\subcaptionbox{Rosenbrock}[.32\textwidth]{
			\includegraphics[width=\linewidth]{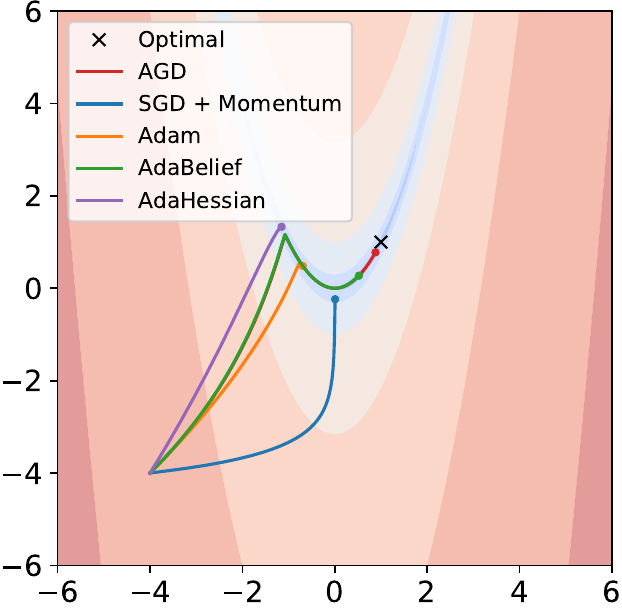}
		}
		\caption{Trajectories of different optimizers in three test functions, where $f(x,y)=(x+y)^2+(x-y)^2/10$. We also provide animated versions at \url{https://youtu.be/Qv5X3v5YUw0}.}
		\label{fig:numerical-analysis}
	\end{minipage}
	\vspace{-10pt}
\end{wrapfigure}


In this section, we present a comparison between AGD and several SOTA optimizers on three test functions. We use the parameter settings from \citet{adabelief}, where the learning rate is set to 1e-3 for all adaptive optimizers, along with the same default values of $\epsilon~(\mbox{or}~\delta)$ (1e-8) and betas ($\beta_1=0.9, \beta_2=0.999$). For SGD, we set the momentum to 0.9 and the learning rate to 1e-6 to ensure numerical stability. As shown in Figure~\ref{fig:numerical-analysis}, AGD exhibits promising results by firstly reaching the optimal points in all experiments before other competitors. In addition, we conduct a search of the largest learning rate for each optimizer with respect to the Beale function, and AGD once again stands out as the strongest method. More details on the search process can be found in Appendix~\ref{appendix:num_exp}.

	\section{Experiments}
\label{sec:4}
\subsection{Experiment setup}

We extensively compared the performance of various optimizers on diverse learning tasks in NLP, CV, and RecSys; we only vary the settings for the optimizers and keep the other settings consistent in this evaluation. To offer a comprehensive analysis, we provide a detailed description of each task and the optimizers' efficacy in different application domains.

\begin{wraptable}{R}{0.75\textwidth}
	\vspace{-12pt}
	\caption{Experiments setup.}
	\label{tab:exp-overview}
	\centering
	\setlength{\tabcolsep}{2pt} 
	\renewcommand{\arraystretch}{3.0}
	{ \fontsize{8.3}{3}\selectfont{
			\begin{tabular}{llllll}
				\toprule
				Task                    & Dataset       & Model             & \emph{Train} & \emph{Val/Test} & \emph{Params} \\
				\midrule
				&               & 1-layer LSTM      &              &                 & 5.3M          \\
				NLP-LM                  & PTB           & 2-layer LSTM      & 0.93M        & 730K/82K        & 13.6M         \\
				&               & 3-layer LSTM      &              &                 & 24.2M         \\
				NLP-NMT                 & IWSLT14 De-En & Transformer \texttt{small} & 153K         & 7K/7K           & 36.7M         \\
				\hdashline\noalign{\vskip 0.5ex}
				\multirow{2}{*}{CV}     & Cifar10       & ResNet20/ResNet32 & 50K          & 10K             & 0.27M/0.47M   \\
				& ImageNet      & ResNet18          & 1.28M        & 50K             & 11.69M        \\
				\hdashline\noalign{\vskip 0.5ex}
				\multirow{2}{*}{RecSys} & Avazu         & MLP               & 36.2M        & 4.2M            & 151M          \\ 
				& Criteo        & DCN               & 39.4M        & 6.6M            & 270M          \\ 
				\bottomrule
			\end{tabular}
	}}
\end{wraptable}
\textbf{NLP:}
We conduct experiments using Language Modeling (LM) on Penn TreeBank \cite{treebank} and Neural Machine Translation (NMT) on IWSLT14 German-to-English (De-En) \cite{cettolo-etal-2014-report} datasets. For the LM task, we train 1, 2, and 3-layer LSTM models with a batch size of 20 for 200 epochs. For the NMT task, we implement the Transformer \texttt{small} architecture, and employ the same pre-processing method and settings as AdaHessian \cite{adahessian}, including a length penalty of 1.0, beam size of 5, and max tokens of 4096. We train the model for 55 epochs and average the last 5 checkpoints for inference. We maintain consistency in our learning rate scheduler and warm-up steps. Table~\ref{tab:exp-overview} provides complete details of the experimental setup.

\textbf{CV:}
We conduct experiments using ResNet20 and ResNet32 on the Cifar10 \cite{cifar10} dataset, and ResNet18 on the ImageNet \cite{ILSVRC15} dataset, as detailed in Table~\ref{tab:exp-overview}. It is worth noting that the number of parameters of ResNet18 is significantly larger than that of ResNet20/32, stemming from inconsistencies in ResNet's naming conventions. Within the ResNet architecture, the consistency in filter sizes, feature maps, and blocks is maintained only within specific datasets. Originally proposed for ImageNet, ResNet18 is more complex compared to ResNet20 and ResNet32, which were tailored for the less demanding Cifar10 dataset. Our training process involves 160 epochs with a learning rate decay at epochs 80 and 120 by a factor of 10 for Cifar10, and 90 epochs with a learning rate decay every 30 epochs by a factor of 10 for ImageNet. The batch size for both datasets is set to 256.

\textbf{RecSys:}
We conduct experiments on two widely used datasets, Avazu \cite{avazu} and Criteo \cite{criteo}, which contain logs of display ads. The goal is to predict the Click Through Rate (CTR). We use the samples from the first nine days of Avazu for training and the remaining samples for testing. We employ the Multilayer Perceptron (MLP) structure (a fundamental architecture used in most deep CTR models). The model maps each categorical feature into a 16-dimensional embedding vector, followed by four fully connected layers of dimensions 64, 32, 16, and 1, respectively. For Criteo, we use the first 6/7 of all samples as the training set and last 1/7 as the test set. We adopt the Deep \& Cross Network (DCN) \cite{dcn} with an embedding size of 8, along with two deep layers of size 64 and two cross layers. Detailed summary of the specifications can be found in Table~\ref{tab:exp-overview}. For both datasets, we train them for one epoch using a batch size of 512.

Optimizers to compare include SGD \cite{sgd}, Adam \cite{adam}, AdamW \cite{adamw}, AdaBelief \cite{adabelief} and AdaHessian \cite{adahessian}. To determine each optimizer's hyperparameters, we adopt the parameters suggested in the literature of AdaHessian and AdaBelief when the experimental settings are identical. Otherwise, we perform hyperparameter searches for optimal settings. A detailed description of this process can be found in Appendix~\ref{appendix:config_hyper}. For our NLP and CV experiments, we utilize GPUs with the PyTorch framework \cite{pytorch}, while our RecSys experiments are conducted with three parameter servers and five workers in the TensorFlow framework \cite{tensorflow}. To ensure the reliability of our results, we execute each experiment five times with different random seeds and calculate statistical results.

\subsection{NLP}
We report the perplexity (PPL, lower is better) and case-insensitive BiLingual Evaluation Understudy (BLEU, higher is better) score on test set for LM and NMT tasks, respectively. The results are shown in Table~\ref{tab:nlp-results}. For the LM task on PTB, AGD achieves the lowest PPL in all 1,2,3-layer LSTM experiments, as demonstrated in Figure~\ref{fig:lstm}. For the NMT task on IWSLT14, AGD is on par with AdaBelief, but outperforms the other optimizers.

\begin{table*}[!htbp]
	\centering
	\setlength{\tabcolsep}{6pt} 
	\renewcommand{\arraystretch}{3.0}
	{ \fontsize{8.3}{3}\selectfont{
	\caption{Test PPL and BLEU score for LM and NMT tasks. \dag\  is reported in AdaHessian \cite{adahessian}.}
	\label{tab:nlp-results}
	\centering
	\begin{tabular}{lcccc}
		\toprule
		Dataset        & \multicolumn{3}{c}{PTB}                 & IWSLT14                                                                                                       \\
		Metric          & \multicolumn{3}{c}{PPL, lower is	better}  	& BLEU, higher is better                                                                                \\
		Model           & 1-layer LSTM                             & 2-layer LSTM               & 3-layer LSTM             & Transformer                                          \\
		\midrule
		SGD             & $85.36 \pm .34 $ ({\color{green} $-4.13$}) & $67.26 \pm .17$ ({\color{green} $-1.42$})  & $\phantom{0}63.68 \pm .17$    ({\color{green} $-2.79$})    & $28.57 \pm .15$\dag   ({\color{green} $+7.37$})                 \\
		Adam            & $84.50 \pm .16 $ ({\color{green} $-3.27$}) & $67.01 \pm .11$ ({\color{green} $-1.17$})  & $\phantom{0}64.45 \pm .26$    ({\color{green} $-3.56$})   & $32.93 \pm .26$\phantom{\dag}({\color{green} $+3.01$})         \\
		AdamW           & $88.16 \pm .19 $ ({\color{green} $-6.93$}) & $95.25 \pm 1.33         $  ({\color{green} $-29.41$})  & $102.61 \pm 1.13$ ({\color{green} $-41.72$})     & $35.82 \pm .06$\phantom{\dag}({\color{green} $+0.12$})          \\
		AdaBelief        & $84.40 \pm .21 $ ({\color{green} $-3.17$}) & $66.69 \pm .23$ ({\color{green} $-0.85$})  & $\phantom{0}61.34 \pm .11$   ({\color{green} $-0.45$})     & $35.93 \pm .08$\phantom{\dag}({\color{green} $+0.01$})\\
		AdaHessian      & $88.62 \pm .15$  ({\color{green} $-7.39$}) & $73.37 \pm .22$ ({\color{green} $-7.53$})  & $\phantom{0}69.51 \pm .19$    ({\color{green} $-8.62$})   & $35.79 \pm .06$\dag  ({\color{green} $+0.15$})                  \\
		\midrule
		\textbf{AGD} & $\mathbf{81.23 \pm .17}$                 & $\mathbf{65.84 \pm .18}$   & $\mathbf{60.89 \pm .09}$   							  & $\mathbf{35.94 \pm .11}$ \phantom{\dag}    \\
		\bottomrule
	\end{tabular}
}}
\end{table*}

\begin{figure}[!htpb]
	\begin{minipage}[t]{\linewidth}
		\centering
		\subcaptionbox{1-layer}[.32\textwidth]{
			\includegraphics[width=\linewidth]{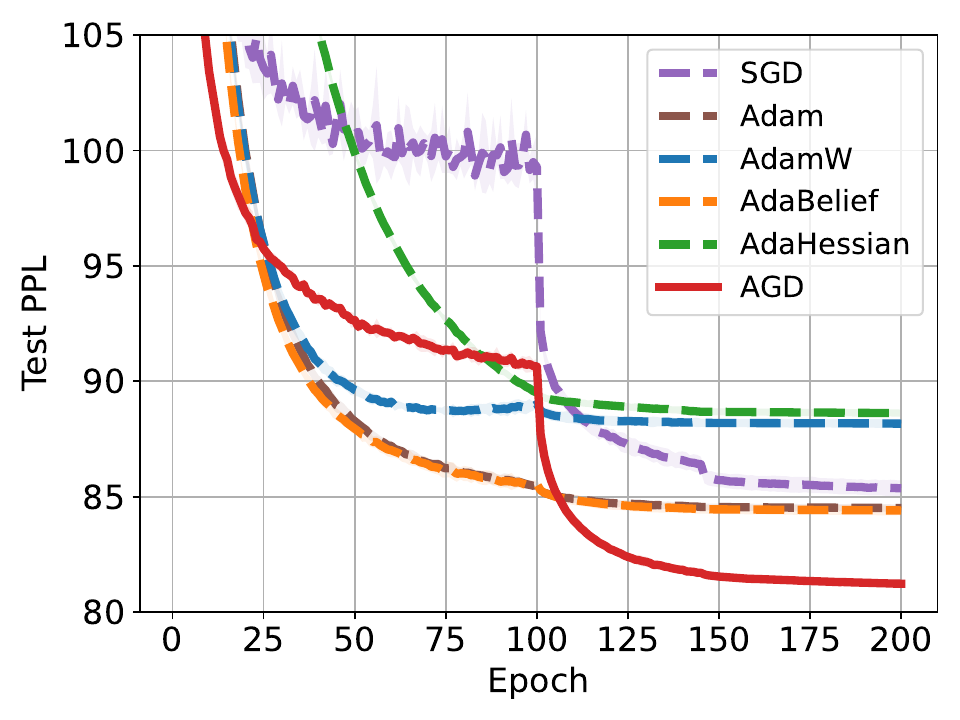}
		}
		\subcaptionbox{2-layer\label{fig:lstm2}}[.32\textwidth]{
			\includegraphics[width=\linewidth]{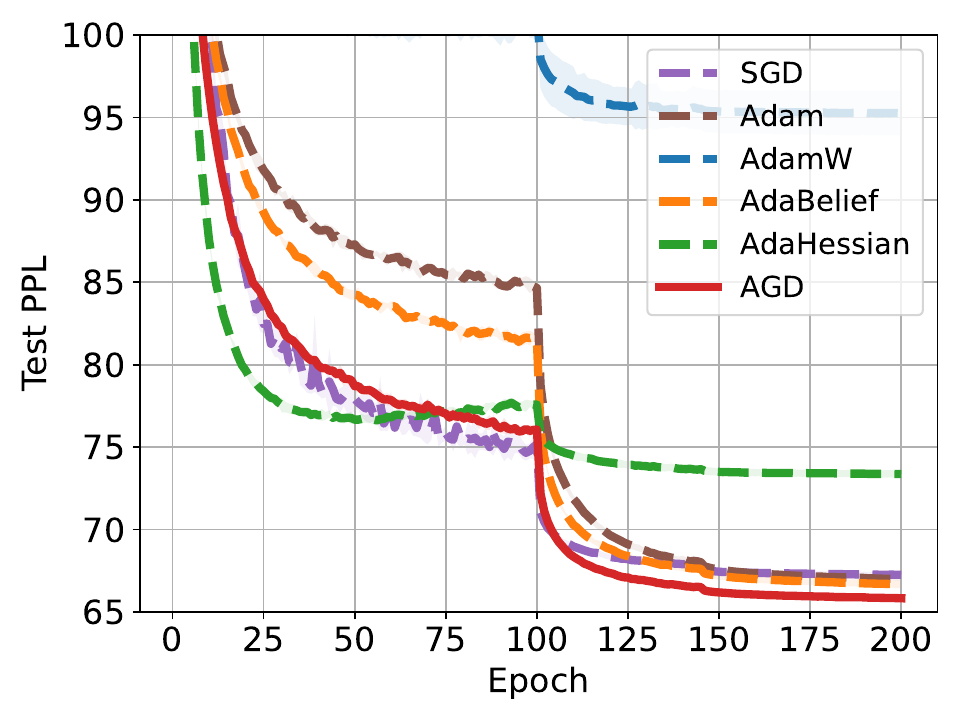}
		}
		\subcaptionbox{3-layer}[.32\textwidth]{
			\includegraphics[width=\linewidth]{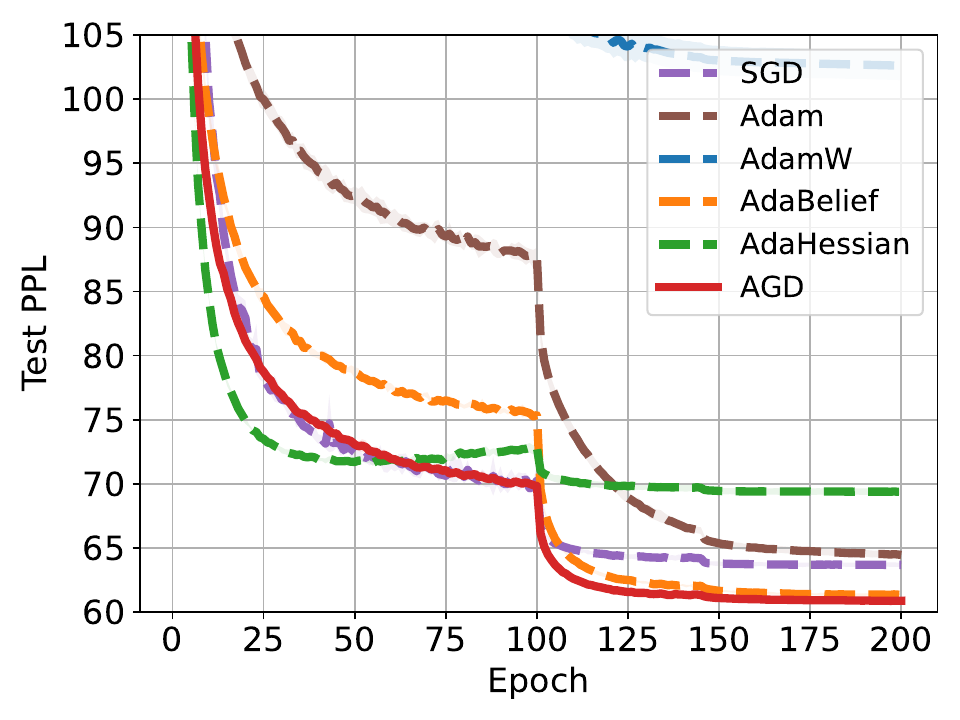}
		}
		\caption{Test PPL ([$\mu \pm \sigma$]) on Penn Treebank for 1,2,3-layer LSTM.}
		\label{fig:lstm}
	\end{minipage}

	\begin{minipage}[t]{\linewidth}
		\centering
		\subcaptionbox{\scriptsize ResNet20 on Cifar10}[.32\textwidth]{
			\includegraphics[width=\linewidth]{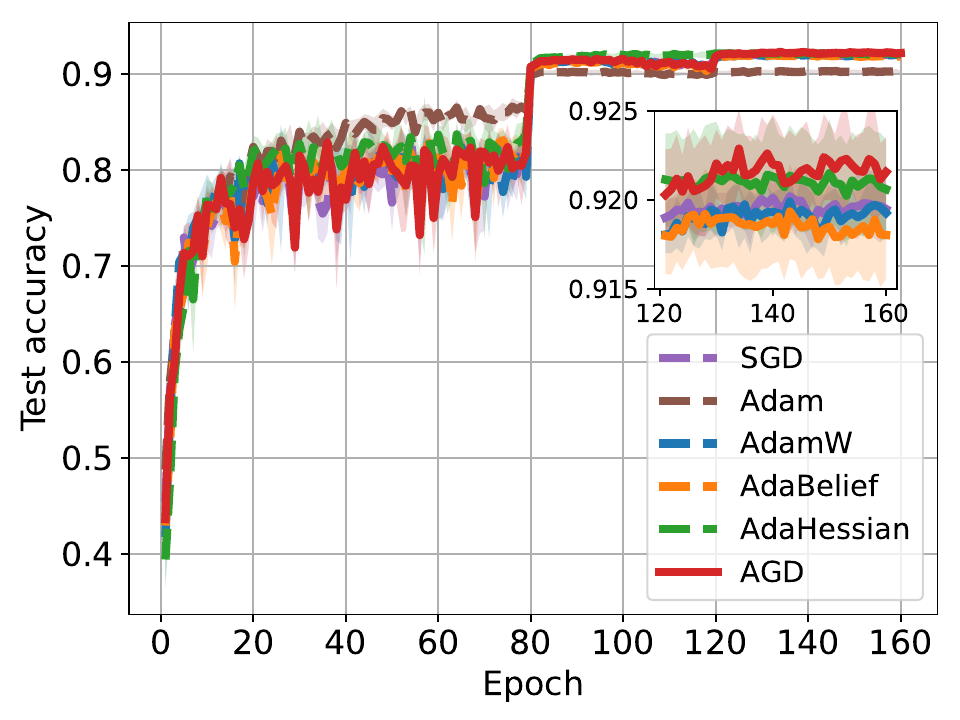}
		}
		\subcaptionbox{\scriptsize ResNet32 on Cifar10}[.32\textwidth]{
			\includegraphics[width=\linewidth]{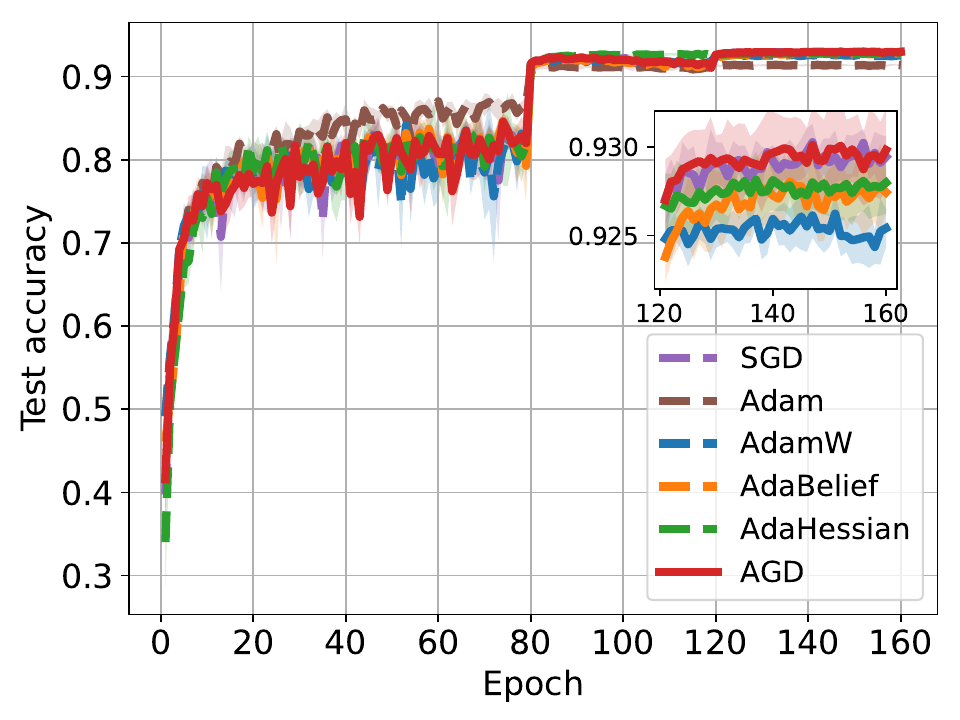}
		}
		\subcaptionbox{\scriptsize ResNet18 on ImageNet}[.32\textwidth]{
			\includegraphics[width=\linewidth]{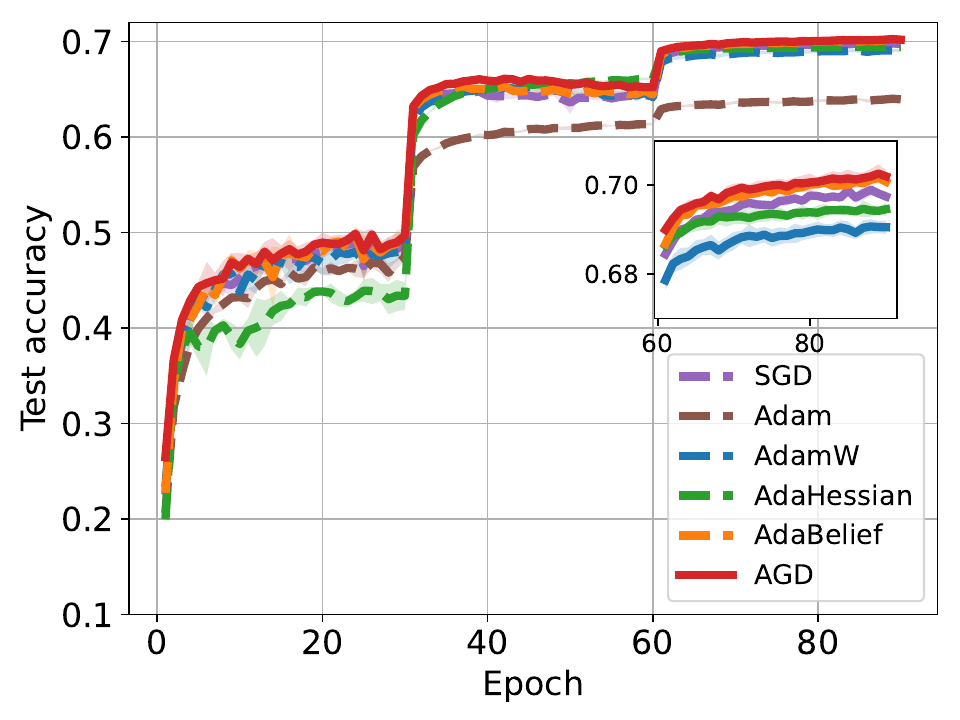}
		}
		\caption{Test accuracy ([$\mu \pm \sigma$]) of different optimizers for ResNet20/32 on Cifar10 and ResNet18 on ImageNet.}
		\label{fig:cv}
	\end{minipage}
\end{figure}

\subsection{CV}
Table~\ref{tab:cv-results-acc} reports the top-1 accuracy for different optimizers when trained on Cifar10 and ImageNet. It is remarkable that AGD outperforms other optimizers on both Cifar10 and ImageNet.
The test accuracy ($[\mu\pm\sigma]$) curves of different optimizers for ResNet20/32 on Cifar10 and ResNet18 on ImageNet are illustrated in Figure~\ref{fig:cv}.
Notice that the numbers of SGD and AdaHessian on ImageNet are lower than the numbers reported in original papers \cite{Chen2020, adahessian}, which were run only once (we average multiple trials here).
AdaHessian can achieve $70.08\%$ top-1 accuracy in \citet{adahessian} while we report $69.57 \pm 0.12 \%$. Due to the limited training details provided in \citet{adahessian}, it is difficult for us to explain the discrepancy. However, regardless of which result of AdaHessian is taken, AGD outperforms AdaHessian significantly.
Our reported top-1 accuracy of SGD is $69.94 \pm 0.10 \%$, slightly lower than $70.23\%$ reported in \citet{Chen2020}. We find that the differences in training epochs, learning rate scheduler and weight decay rate are the main reasons. We also run the experiment using the same configuration as in \citet{Chen2020}, and AGD can achieve $70.45\%$ accuracy at lr = 4e-4 and $\delta$ = 1e-5, which is still better than the $70.23\%$ result reported in \citet{Chen2020}.

We also report the accuracy of AGD for ResNet18 on Cifar10 for comparing with the SOTA results \footnote{https://paperswithcode.com/sota/stochastic-optimization-on-cifar-10-resnet-18}, which is listed in Appendix~\ref{appendix:cifar10}.
Here we clarify again the ResNet naming confusion.
The test accuracy of ResNet18 on Cifar10 training with AGD is above 95\%, while ResNet32 is about 93\% since ResNet18 is much more complex than ResNet32.

\begin{table}[!htbp]
	\begin{minipage}[t]{0.53\linewidth}
		\caption{Top-1 accuracy for different optimizers when trained on Cifar10 and ImageNet.}
		\label{tab:cv-results-acc}
		\centering
		\setlength{\tabcolsep}{1.0pt} 
		\renewcommand{\arraystretch}{3.0}
		{ \fontsize{5.8}{3}\selectfont{
				\begin{tabular}{lccc}
					\toprule
					Dataset    & \multicolumn{2}{c}{Cifar10} & ImageNet                                            \\
					Model      & ResNet20                    & ResNet32                 & ResNet18                 \\
					\midrule
					SGD        & $92.14 \pm .14$ ({\color{green} $+0.21$})             & $93.10 \pm .07$ ({\color{green} $+0.02$})          & $69.94 \pm .10$ ({\color{green} $+0.41$})          \\
					Adam       & $90.46 \pm .20$ ({\color{green} $+1.89$})             & $91.54 \pm .12$ ({\color{green} $+1.58$})          & $64.03 \pm .16$ ({\color{green} $+6.32$})          \\
					AdamW      & $92.12 \pm .14$ ({\color{green} $+0.23$})             & $92.72 \pm .01$ ({\color{green} $+0.40$})          & $69.11 \pm .17$ ({\color{green} $+1.24$})          \\
					AdaBelief  & $92.19 \pm .15$ ({\color{green} $+0.16$})             & $92.90 \pm .13$ ({\color{green} $+0.22$})          & $70.20 \pm .03$ ({\color{green} $+0.15$})          \\
					AdaHessian & $92.27 \pm .27$ ({\color{green} $+0.08$})             & $92.91 \pm .14$ ({\color{green} $+0.21$})          & $69.57 \pm .12$ ({\color{green} $+0.78$})          \\
					\midrule
					AGD     & $\mathbf{92.35 \pm .24}$    & $\mathbf{93.12 \pm .18}$ & $\mathbf{70.35 \pm .17}$ \\
					\bottomrule
				\end{tabular}
		}}
	\end{minipage}
	\hspace{0.15cm}
	\begin{minipage}[t]{0.46\linewidth}
		\caption{Test AUC for different optimizers when trained on Avazu and Criteo.}
		\label{tab:recsys-results-auc}
		\centering
		\setlength{\tabcolsep}{1.0pt} 
		\renewcommand{\arraystretch}{3.0}
		{ \fontsize{5.8}{3}\selectfont{
				\begin{tabular}{lcc}
					\toprule
					Dataset    & Avazu                       & Criteo                      \\
					Model      & MLP                         & DCN                         \\
					\midrule
					SGD        & $0.7463\pm.0005$ ({\color{green} $+1.7$\textperthousand})          & $0.7296\pm.0067$ ({\color{green} $+72.7$\textperthousand})         \\
					Adam       & $0.7458\pm.0010$ ({\color{green} $+2.2$\textperthousand})          & $\mathbf{0.8023 \pm .0002}$ ($+0.0$\textperthousand) \\
					AdaBelief  & $0.7467\pm.0009$ ({\color{green} $+1.3$\textperthousand})         & $0.8022\pm.0002$ ({\color{green} $+0.1$\textperthousand})           \\
					AdaHessian & $0.7434\pm.0006$ ({\color{green} $+4.6$\textperthousand})         & $0.8004\pm.0005$ ({\color{green} $+1.9$\textperthousand})         \\
					\midrule
					AGD     & $\mathbf{0.7480 \pm .0008}$ & $\mathbf{0.8023 \pm .0004}$ \\
					\bottomrule
				\end{tabular}
		}}
	\end{minipage}
\end{table}

\subsection{RecSys}
To evaluate the accuracy of CTR estimation, we have adopted the Area Under the receiver-operator Curve (AUC) as our evaluation criterion, which is widely recognized as a reliable measure \cite{auc}. As stated in \citet{wnd, dcn, ensemble, barsctr}, even an absolute improvement of 1\textperthousand{} in AUC can be considered practically significant given the difficulty of improving CTR prediction. Our experimental results in Table~\ref{tab:recsys-results-auc} indicate that AGD can achieve highly competitive or significantly better performance when compared to other optimizers. In particular, on the Avazu task, AGD outperforms all other optimizers by more than 1\textperthousand. On the Criteo task, AGD performs better than SGD and AdaHessian, and achieves comparable performance to Adam and AdaBelief.
\subsection{The effect of $\delta$}
\label{subsec:delta}
\begin{figure}[h]
		\centering
		\subcaptionbox{\scriptsize ResNet18 on ImageNet\label{fig:resnet18}}[.24\textwidth]{
			\includegraphics[width=\linewidth]{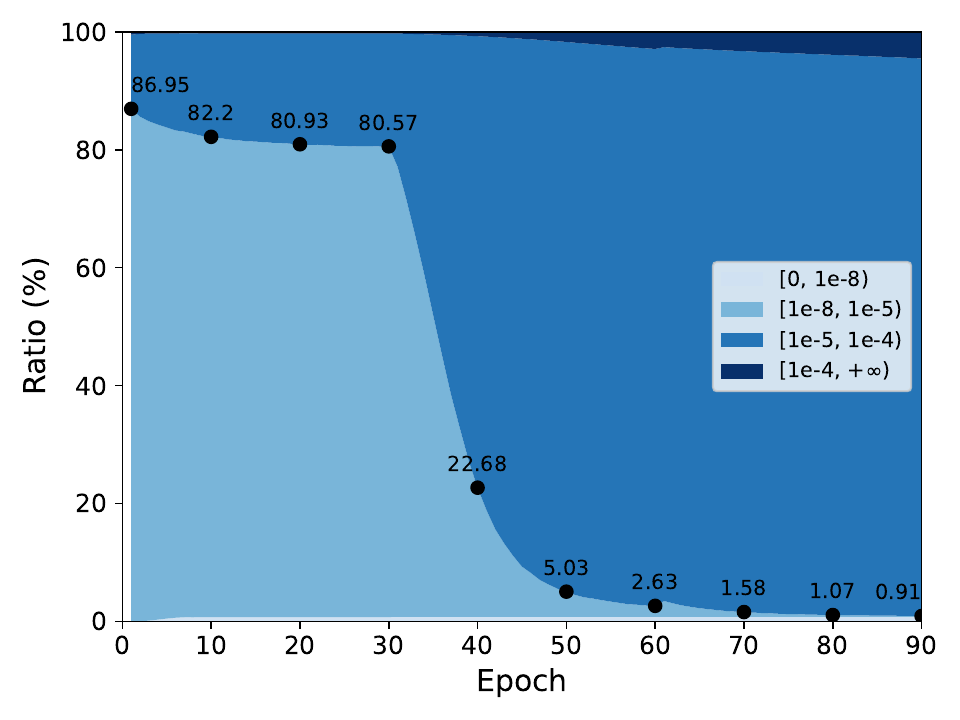}
		}
		\subcaptionbox{\scriptsize Transformer on IWSLT14\label{fig:trans}}[.24\textwidth]{
			\includegraphics[width=\linewidth]{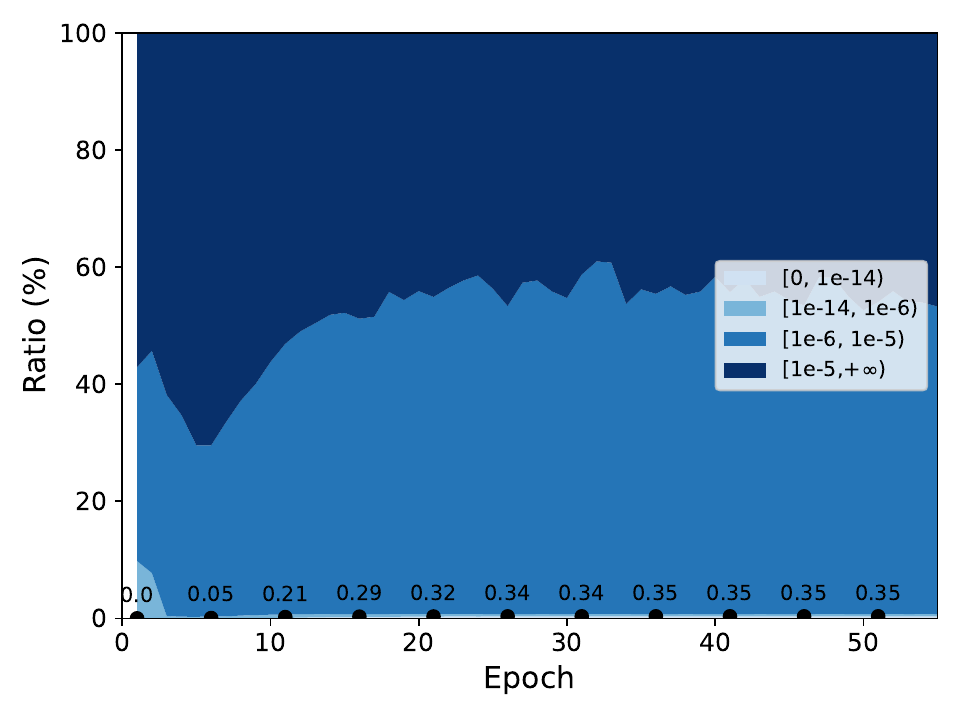}
		}
		\subcaptionbox{\scriptsize 2-layer LSTM on PTB\label{fig:lstm-ptb}}[.24\textwidth]{
			\includegraphics[width=\linewidth]{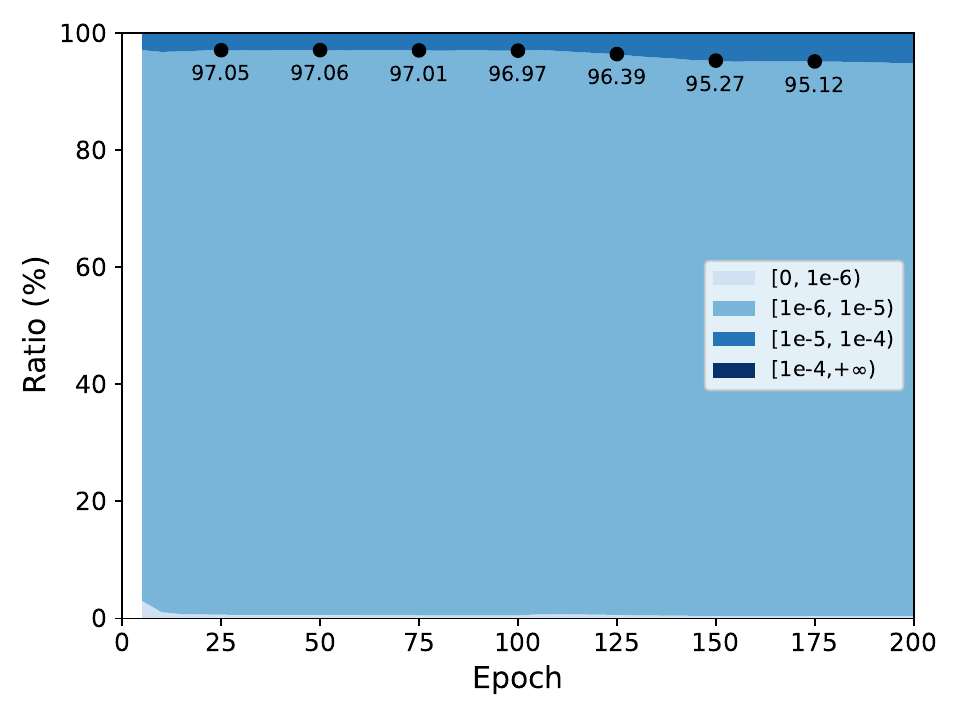}
		}
		\subcaptionbox{\scriptsize DCN on Criteo\label{fig:dcn}}[.24\textwidth]{
			\includegraphics[width=\linewidth]{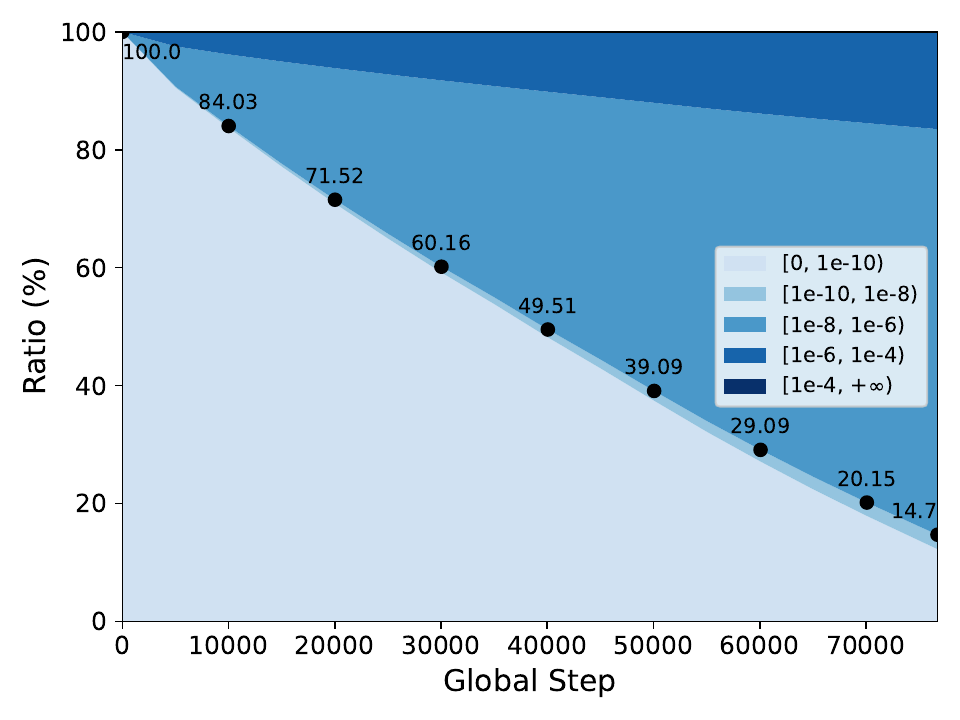}
		}
		\caption{The distribution of $\hat{\bm{b}}_t$ on different epochs/steps. The colored area denotes the ratio of $\hat{\bm{b}}_t$ in the corresponding interval. The values of $\delta$ for ResNet18 on ImageNet, Transformer on IWSLT14, 2-layer LSTM on PTB, and DCN on Criteo are 1e-5, 1e-14, 1e-5 and 1e-8, respectively.}
		\label{fig:imagenet-ratio}
\end{figure}


In this section, we aim to provide a comprehensive analysis of the impact of $\delta$ on the training process by precisely determining the percentage of $\hat{\bm{b}}_t$ that Algorithm~\ref{alg:AGD} truncates. To this end, Figure~\ref{fig:imagenet-ratio} shows the distribution of $\hat{\bm{b}}_t$ across various tasks under the optimal configuration that we have identified. The black dot on the figure provides the precise percentage of $\hat{\bm{b}}_t$ that $\delta$ truncates during the training process. Notably, a lower percentage indicates a higher degree of SGD-like updates compared to adaptive steps, which can be adjusted through $\delta$.


As SGD with momentum generally outperforms adaptive optimizers on CNN tasks~\cite{adahessian, adabelief}, we confirm this observation as illustrated in Figure~\ref{fig:resnet18}: AGD behaves more like SGD during the initial stages of training (before the first learning rate decay at the 30th epoch) and switches to adaptive optimization for fine-tuning. Figure~\ref{fig:trans} indicates that the parameters taking adaptive updates are dominant, as expected because adaptive optimizers such as AdamW are preferred in transformers. Figure~\ref{fig:lstm-ptb} demonstrates that most parameters update stochastically, which explains why AGD has a similar curve to SGD in Figure~\ref{fig:lstm2} before the 100th epoch. The proportion of parameters taking adaptive updates grows from 3\% to 5\% afterward, resulting in a better PPL in the fine-tuning stage. Concerning Figure~\ref{fig:dcn}, the model of the RecSys task trains for only one epoch, and AGD gradually switches to adaptive updates for a better fit to the data.

\subsection{Computational cost}

We train a Transformer \texttt{small} model for IWSLT14 on a single NVIDIA P100 GPU. AGD is comparable to the widely used AdamW optimizer, while significantly outperforms AdaHessian in terms of memory footprint and training speed. As a result, AGD can be a drop-in replacement for AdamW with similar computation cost and better generalization performance.

\begin{table}[H]
\caption{Computational cost for Transformer \texttt{small}.}
\label{tab:computational-cost}
\centering
\begin{tabular}{lccc}
\toprule
\textbf{Optimizer} & \textbf{Memory} & \textbf{Time per Epoch} & \textbf{Relative time to AdamW} \\
\midrule
SGD&\SI{5119}{\mega\byte}&\SI{230}{\second}&0.88× \\
AdamW&\SI{5413}{\mega\byte}&\SI{260}{\second}&1.00× \\
AdaHessian&\SI{8943}{\mega\byte}&\SI{750}{\second}&2.88× \\
AGD&\SI{5409}{\mega\byte}&\SI{278}{\second}&1.07× \\
\bottomrule
\end{tabular}
\end{table}


	\section{Theoretical analysis}
Using the framework developed in \citet{amsgrad, momentum_convergence, nonconvex_convergence, nonconvex_convergence2}, we have the following theorems that provide the convergence in non-convex and convex settings. Particularly, we use $\beta_{1, t}$ to replace $\beta_{1}$, where $\beta_{1, t}$ is non-increasing with respect to $t$.

\begin{theorem}
	(Convergence in non-convex settings) Suppose that the following assumptions are satisfied:
	\begin{enumerate}[leftmargin=14pt]
		\item $f$ is differential and lower bounded, i.e., $f(\bm{w}^*) > -\infty$ where $\bm{w}^*$ is an optimal solution. $f$ is also $L$-smooth, i.e., $\forall{}\bm{u}, \bm{v}\in\mathbb{R}^n$, we have
		$f(\bm{u}) \leq f(\bm{v}) + \left<\nabla{}f(\bm{v}), \bm{u} - \bm{v}\right> + \frac{L}{2}\|\bm{u} - \bm{v}\|^2.$
		\item At step $t$, the algorithm can access a bounded noisy gradient and the true gradient is bounded, i.e., $\|\bm{g}_t\|_{\infty}\leq G_{\infty}, \|\nabla{}f(\bm{w}_t)\|_{\infty}\leq G_{\infty}, \forall{}t\in[T]$. Without loss of generality, we assume $G_{\infty} \geq\delta$.
		\item The noisy gradient is unbiased and the noise is independent, i.e., $\bm{g}_t = \nabla{}f(\bm{w}_t) + \bm{\zeta}_{t}, \mathbf{E}[\bm{\zeta}_{t}] = \bm{0}$ and $\bm{\zeta}_{i}$ is independent of $\bm{\zeta}_{j}$ if $i\neq j$.
		\item $\alpha_t = \alpha / \sqrt{t}$, $\beta_{1, t}$ is non-increasing satisfying $\beta_{1, t}\leq\beta_{1}\in [0, 1)$, $\beta_{2}\in [0, 1)$ and $b_{t, i} \leq b_{t+1,i}\  \forall{}i\in[n]$.
	\end{enumerate}
	Then Algorithm~\ref{alg:AGD} yields
	\begin{equation}
		\begin{aligned}
			\min_{t\in[T]}\mathbf{E}[\|\nabla{}f(\bm{w}_t)\|^2] < C_3\frac{1}{\sqrt{T} - \sqrt{2}} + C_4\frac{\log{T}}{\sqrt{T} - \sqrt{2}} +C_5\frac{\sum_{t=1}^{T}\hat{\alpha}_t(\beta_{1, t} - \beta_{1, t+1})}{\sqrt{T} - \sqrt{2}},
		\end{aligned}
	\label{eq:th2_final}
	\end{equation}
	where $C_3$, $C_4$ and $C_5$ are defined as follows:
	\begin{equation*}
		\begin{aligned}
			C_3 = & \frac{G_{\infty}}{\alpha(1 - \beta_{1})^2(1 - \beta_{2})^2}\bigg(f(\bm{w}_1) - f(\bm{w}^*) + \frac{nG_{\infty}^2\alpha}{(1 - \beta_{1})^8\delta^2}(\delta + 8L\alpha) + \frac{\alpha\beta_{1}nG_{\infty}^2}{(1 - \beta_{1})^3\delta} \bigg), \\
			C_4 = & \frac{15LnG_{\infty}^3\alpha}{2(1 - \beta_{2})^2(1 - \beta_{1})^{10}\delta^2}, \quad C_5 = \frac{nG_{\infty}^3}{\alpha(1 - \beta_{1})^5(1 - \beta_{2})^2\delta}.
		\end{aligned}
	\end{equation*}
	\label{th:non-convex}
\end{theorem}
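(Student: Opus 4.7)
The plan is to adapt the standard descent-lemma technique used in non-convex analyses of adaptive methods (\citet{nonconvex_convergence, amsgrad}) to AGD's update. First I would apply $L$-smoothness to consecutive iterates,
\begin{equation*}
f(\bm{w}_{t+1}) \leq f(\bm{w}_t) + \langle \nabla f(\bm{w}_t), \bm{w}_{t+1} - \bm{w}_t \rangle + \frac{L}{2}\|\bm{w}_{t+1} - \bm{w}_t\|^2,
\end{equation*}
substitute the AGD update $\bm{w}_{t+1} - \bm{w}_t = -\hat{\alpha}_t \bm{m}_t / \bm{D}_t$ with $\bm{D}_t := \max(\sqrt{\bm{b}_t}, \delta\sqrt{1-\beta_2^t})$ and $\hat{\alpha}_t := \alpha_t\sqrt{1-\beta_2^t}/(1-\beta_1^t)$, and take the conditional expectation. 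The crucial structural fact I would exploit throughout is that the auto-switch construction delivers the uniform lower bound $D_{t,i} \geq \delta\sqrt{1-\beta_2^t}$; this converts every appearance of $\|\bm{m}_t/\bm{D}_t\|$ or $\|\bm{g}_t/\bm{D}_t\|$ into a quantity controlled via $G_\infty/\delta$, and is precisely what makes the $\delta$-based truncation analytically friendlier than the additive $\epsilon$ of Adam.

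The key technical step is decoupling the EMA momentum from the true gradient. Following \citet{momentum_convergence}, I would introduce an auxiliary sequence $\bm{z}_t := \bm{w}_t + \tfrac{\beta_{1,t}}{1-\beta_{1,t}}(\bm{w}_t - \bm{w}_{t-1})$, whose increment is expressible purely in terms of $\bm{g}_t$ (plus a small $\beta_{1,t}\!-\!\beta_{1,t+1}$ correction) and whose noise term vanishes in expectation by assumption~3. Peeling the inner product $\langle \nabla f(\bm{w}_t),\bm{m}_t/\bm{D}_t\rangle$ against $\langle \nabla f(\bm{w}_t),\nabla f(\bm{w}_t)/\bm{D}_{t-1}\rangle$ yields an error controlled by $(1/\bm{D}_{t-1}-1/\bm{D}_t)$, which is summable (and non-negative termwise) by the monotonicity assumption $b_{t,i}\leq b_{t+1,i}$. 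Telescoping $t=1,\ldots,T$, the $f(\bm{w}_1)-f(\bm{w}^*)$ potential absorbs the descent; the quadratic residual $\tfrac{L}{2}\|\bm{w}_{t+1}-\bm{w}_t\|^2$ accumulates into $\sum_t \hat{\alpha}_t^2/\delta^2$, which using $\alpha_t=\alpha/\sqrt{t}$ and $\sum_{t=1}^T 1/t \leq 1+\log T$ produces the $C_4\log T$ factor; the momentum-schedule drift contributes the $C_5\sum_t \hat{\alpha}_t(\beta_{1,t}-\beta_{1,t+1})$ term; and initial-condition plus pure-EMA residuals roll into $C_3$. Dividing by $\sum_{t=1}^T \hat{\alpha}_t \geq c\cdot\alpha(\sqrt{T}-\sqrt{2})$ and using $\min_{t\in[T]}\mathbf{E}\|\nabla f(\bm{w}_t)\|^2 \leq (\sum_t \hat{\alpha}_t)^{-1}\sum_t \hat{\alpha}_t \mathbf{E}\|\nabla f(\bm{w}_t)\|^2$ yields the stated bound.

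The main obstacle will be the careful bookkeeping required to force the constants into the specific form $C_3,C_4,C_5$ displayed in the statement; in particular, tracking how the interaction of the three small quantities $(1-\beta_1)$, $(1-\beta_2)$, and $\delta$ propagates through the momentum-decoupling step is where the high negative powers $(1-\beta_1)^{-8}$ and $(1-\beta_1)^{-10}$ in $C_3$ and $C_4$ arise, and this must be done without losing track of which terms carry the $L$ factor versus which do not. A secondary subtlety is the $t=1$ boundary, where $\bm{b}_0=\bm{m}_0=\bm{0}$ makes $\bm{D}_0$ degenerate; this is handled by the explicit initialization $\bm{s}_1=\bm{m}_1/(1-\beta_1)$ in Algorithm~\ref{alg:AGD} and by absorbing the single-step contribution into $C_3$. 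Once these pieces are in place, the remaining summation is routine.
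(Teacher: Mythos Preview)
Your proposal is correct and takes essentially the same route as the paper: the auxiliary sequence $\bm{z}_t$ you introduce is exactly the paper's $\bm{u}_t$, and the $\delta$-lower-bound, the monotonicity-based telescoping of $\hat{\alpha}_{t-1}/v_{t-1,i}-\hat{\alpha}_t/v_{t,i}$, the $\sum_t 1/t\le 1+\log T$ estimate for the quadratic residual, and the separate handling of the $t=1$ boundary all match. The only presentational refinement is that the paper applies $L$-smoothness directly to the auxiliary sequence rather than to $\bm{w}_t$ as your opening display suggests; this makes the term $-\hat{\alpha}_t\bm{g}_t/\bm{v}_t$ appear cleanly in the increment $\bm{u}_{t+1}-\bm{u}_t$ (so assumption~3 fires without the extra ``peeling against $\bm{D}_{t-1}$'' you describe) and generates one additional cross-term $\langle\nabla f(\bm{u}_t)-\nabla f(\bm{w}_t),\bm{u}_{t+1}-\bm{u}_t\rangle$ that also feeds the $\log T$ bucket.
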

The proof of Theorem~\ref{th:non-convex} is presented in Appendix~\ref{appendix:non-convex}.
There are two important points that should be noted: Firstly, in assumption  2, we can employ the gradient norm clipping technique to ensure the upper bound of the gradients. Secondly, in assumption 4, $b_{t,i} \leq b_{t+1, i}\ \forall i\in[n]$, which is necessary for the validity of Theorems~\ref{th:non-convex} and~\ref{th:convex}, may not always hold. To address this issue, we can implement the AMSGrad condition \citep{amsgrad} by setting $\bm{b}_{t+1} = \max(\bm{b}_{t+1}, \bm{b}_{t})$. However, this may lead to a potential decrease in the algorithm's performance in practice.
The more detailed analysis is provided in Appendix~\ref{appendix:amsgrad_condition}. From Theorem~\ref{th:non-convex}, we have the following corollaries.

\begin{corollary}
	Suppose $\beta_{1, t} = \beta_{1} / \sqrt{t}$, we have
	\begin{equation*}
		\min_{t\in[T]}\mathbf{E}[\|\nabla{}f(\bm{w}_t)\|^2] < C_3\frac{1}{\sqrt{T} - \sqrt{2}} + C_4\frac{\log{T}}{\sqrt{T} - \sqrt{2}} +\frac{C_5\alpha}{1 - \beta_{1}}\frac{\log{}T + 1}{\sqrt{T} - \sqrt{2}},
	\end{equation*}
	where $C_3$, $C_4$ and $C_5$ are the same with Theorem~\ref{th:non-convex}.
	\label{coro:non-convex}
\end{corollary}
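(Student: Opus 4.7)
The plan is to apply Theorem~\ref{th:non-convex} directly with the specific choice $\beta_{1,t}=\beta_{1}/\sqrt{t}$, so the first two terms on the right-hand side of \Eqref{eq:th2_final} carry over verbatim (with the same constants $C_3$ and $C_4$). The only quantity I need to control is the third summation $\sum_{t=1}^{T}\hat{\alpha}_t(\beta_{1,t}-\beta_{1,t+1})$, and the goal is to show it is bounded by $\frac{\alpha(\log T+1)}{1-\beta_{1}}$ so that, after dividing by $\sqrt{T}-\sqrt{2}$, the claimed form appears.

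First I would unpack the standard notation used inside the proof of Theorem~\ref{th:non-convex}: $\hat{\alpha}_t$ is the effective step size $\alpha_t/(1-\beta_{1,t})$, which under assumption~4 admits the bound $\hat{\alpha}_t \leq \alpha_t/(1-\beta_{1}) = \alpha/\bigl((1-\beta_{1})\sqrt{t}\bigr)$. Next, since $\beta_{1,t}=\beta_{1}/\sqrt{t}$ is non-negative and decreasing, I would use the crude but sufficient inequality
\begin{equation*}
  \beta_{1,t}-\beta_{1,t+1} \leq \beta_{1,t} = \frac{\beta_{1}}{\sqrt{t}},
\end{equation*}
which avoids any delicate manipulation of $1/\sqrt{t}-1/\sqrt{t+1}$.

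Combining the two bounds gives $\hat{\alpha}_t(\beta_{1,t}-\beta_{1,t+1}) \leq \frac{\alpha\,\beta_{1}}{(1-\beta_{1})\,t}$. Summing over $t=1,\dots,T$ and using the elementary harmonic-series estimate $\sum_{t=1}^{T}\tfrac{1}{t}\leq 1+\log T$ together with $\beta_{1}\leq 1$ yields
\begin{equation*}
  \sum_{t=1}^{T}\hat{\alpha}_t(\beta_{1,t}-\beta_{1,t+1}) \leq \frac{\alpha(\log T+1)}{1-\beta_{1}}.
\end{equation*}
Substituting this into the bound of Theorem~\ref{th:non-convex} produces exactly the stated inequality, with $C_3$, $C_4$, $C_5$ unchanged.

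The only potential obstacle is confirming that the quantity $\hat{\alpha}_t$ appearing inside the proof of Theorem~\ref{th:non-convex} really is $\alpha_t/(1-\beta_{1,t})$ (the usual AMSGrad-style normalization); if instead it carries an additional factor, the argument still goes through because any such factor is $O(1)$ under assumption~4 and can be absorbed into the constants. Beyond that, all the remaining work is routine and amounts to a one-line harmonic-sum estimate.
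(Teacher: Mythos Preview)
Your proposal is correct and follows essentially the same route as the paper: bound $\beta_{1,t}-\beta_{1,t+1}\le\beta_{1,t}$, combine with $\hat{\alpha}_t\le \alpha/\bigl((1-\beta_{1})\sqrt{t}\bigr)$ to get a $1/t$ summand, and apply the harmonic-sum estimate. The only slip is that the paper defines $\hat{\alpha}_t=\alpha_t\sqrt{1-\beta_{2}^t}/(1-\beta_{1,t}^t)$ rather than $\alpha_t/(1-\beta_{1,t})$, but since $\sqrt{1-\beta_{2}^t}\le 1$ and $1-\beta_{1,t}^t\ge 1-\beta_{1}$ your upper bound on $\hat{\alpha}_t$ holds verbatim and nothing else changes.
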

The proof of Corollary~\ref{coro:non-convex} can be found in Appendix~\ref{appendix:non-convex_coro}.

\begin{corollary}
	Suppose $\beta_{1, t} = \beta_{1}, \ \forall{}t\in[T]$, we have
	\begin{equation*}
		\min_{t\in[T]}\mathbf{E}[\|\nabla{}f(\bm{w}_t)\|^2] < C_3\frac{1}{\sqrt{T} - \sqrt{2}} + C_4\frac{\log{T}}{\sqrt{T} - \sqrt{2}},
	\end{equation*}
	where $C_3$ and $C_4$ are the same with Theorem~\ref{th:non-convex}.
	\label{coro:non-convex2}
\end{corollary}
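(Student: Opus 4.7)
The plan is to obtain Corollary \ref{coro:non-convex2} as an immediate specialization of Theorem \ref{th:non-convex}. Under the hypothesis $\beta_{1,t} = \beta_1$ for every $t \in [T]$, I would first verify that the setting meets all four assumptions of Theorem \ref{th:non-convex}: the monotonicity requirement $\beta_{1,t+1} \le \beta_{1,t} \le \beta_1$ holds trivially with equality, the step-size schedule $\alpha_t = \alpha/\sqrt{t}$ is unchanged, the preconditioner monotonicity $b_{t,i} \le b_{t+1,i}$ carries over (enforced, if needed, by the AMSGrad fix discussed after Theorem \ref{th:non-convex}), and assumptions 2 and 3 on bounded noisy and true gradients are independent of the choice of $\beta_{1,t}$. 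Thus Theorem \ref{th:non-convex} applies and the constants $C_3,C_4,C_5$ retain exactly the closed forms stated there.

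The key observation is then that the last summand on the right-hand side of \eqref{eq:th2_final} telescopes to zero. Since $\beta_{1,t} - \beta_{1,t+1} = 0$ for all $t$, we obtain
\begin{equation*}
C_5 \frac{\sum_{t=1}^{T} \hat{\alpha}_t (\beta_{1,t} - \beta_{1,t+1})}{\sqrt{T} - \sqrt{2}} = 0,
\end{equation*}
irrespective of the effective step-size sequence $\hat{\alpha}_t$. Substituting this into \eqref{eq:th2_final} and keeping the first two terms untouched yields exactly the bound claimed by Corollary \ref{coro:non-convex2}.

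No technical obstacle is expected, since the argument is purely algebraic substitution; the only mild point of care is interpreting $\beta_{1,T+1}$ consistently with the constant schedule so that the last telescoping difference is indeed zero rather than $\beta_1$. This convention matches the momentum-perturbation term that appears inside the proof of Theorem \ref{th:non-convex}, so the specialization is genuinely a one-line computation once Theorem \ref{th:non-convex} is in hand.
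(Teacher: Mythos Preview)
Your proposal is correct and matches the paper's treatment: the paper does not even supply a separate proof for this corollary, treating it as immediate from Theorem~\ref{th:non-convex} once one notes that $\beta_{1,t}-\beta_{1,t+1}=0$ kills the $C_5$ term. Your verification that the constant schedule still satisfies assumption~4 and your remark on the convention for $\beta_{1,T+1}$ are appropriate due diligence for what is indeed a one-line specialization.
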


Corollaries~\ref{coro:non-convex} and \ref{coro:non-convex2} imply the convergence (to the stationary point) rate for AGD is $O(\log{T} / \sqrt{T})$ in non-convex settings.

\begin{theorem}
	(Convergence in convex settings) Let $\{\bm{w}_t\}$ be the sequence  obtained by AGD (Algorithm \ref{alg:AGD}), $\alpha_t = \alpha / \sqrt{t}$, $\beta_{1, t}$ is non-increasing satisfying $\beta_{1, t}\leq\beta_{1}\in [0, 1)$, $\beta_{2}\in [0, 1)$, $b_{t, i} \leq b_{t+1,i}\  \forall{}i\in[n]$ and $\|\bm{g}_t\|_{\infty} \leq G_{\infty}, \forall{}t\in[T]$. Suppose $f_t(\bm{w})$ is convex for all $t\in{}[T]$, $\bm{w}^*$ is an optimal solution of $\sum_{t=1}^{T}f_t(\bm{w})$, i.e., $\bm{w}^* = \argmin_{\bm{w}\in\mathbb{R}^n} \sum_{t=1}^{T}f_t(\bm{w})$ and there exists the constant $D_{\infty}$ such that $\max_{t\in[T]}\|\bm{w}_t - \bm{w}^*\|_{\infty}\leq D_{\infty}$. Then we have the following bound on the regret
	\begin{equation*}
		\begin{aligned}
			\sum_{t=1}^{T}\left(f_t(\bm{w}_t) - f_t(\bm{w}^*)\right) <  \frac{1}{1-\beta_{1}}\left(C_1\sqrt{T}  + \sum_{t=1}^{T}\frac{\beta_{1, t}}{2\hat{\alpha}_t}nD_{\infty}^2 + C_2\sqrt{T}\right),
		\end{aligned}
	\end{equation*}
	where $C_1$ and $C_2$ are defined as follows:
	\begin{equation*}
		\begin{aligned}
			C_1 = \frac{n(2G_{\infty} + \delta)D_{\infty}^2}{2\alpha\sqrt{1-\beta_{2}}(1-\beta_{1})^2}, \quad C_2 = \frac{n\alpha{}G_{\infty}^2}{(1 - \beta_{1})^3}\left(1 + \frac{1}{\delta\sqrt{1 - \beta_{2}}}\right).
		\end{aligned}
	\end{equation*}
	\label{th:convex}
\end{theorem}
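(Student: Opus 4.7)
The plan is to follow the standard Adam/AMSGrad-style regret template, carefully adapted to AGD's clipped denominator. Abbreviate $\hat{\alpha}_t := \alpha_t\sqrt{1-\beta_{2}^t}/(1-\beta_{1}^t)$ and $\hat{v}_{t,i} := \max(\sqrt{b_{t,i}},\,\delta\sqrt{1-\beta_{2}^t})$, so that the AGD update reads, coordinate-wise, $w_{t+1,i} = w_{t,i} - \hat{\alpha}_t\, m_{t,i}/\hat{v}_{t,i}$. Convexity of $f_t$ gives $f_t(\bm{w}_t) - f_t(\bm{w}^*) \le \langle \bm{g}_t, \bm{w}_t - \bm{w}^*\rangle$, so it suffices to bound $\sum_{t=1}^{T}\langle\bm{g}_t, \bm{w}_t - \bm{w}^*\rangle$.

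First I would expand $(w_{t+1,i}-w^*_i)^2$ using the update rule and solve for $m_{t,i}(w_{t,i}-w^*_i)$, obtaining the identity
\begin{equation*}
m_{t,i}(w_{t,i}-w^*_i) = \frac{\hat{v}_{t,i}}{2\hat{\alpha}_t}\bigl[(w_{t,i}-w^*_i)^2 - (w_{t+1,i}-w^*_i)^2\bigr] + \frac{\hat{\alpha}_t}{2}\,\frac{m_{t,i}^2}{\hat{v}_{t,i}}.
\end{equation*}
Substituting $\bm{m}_t = \beta_{1,t}\bm{m}_{t-1} + (1-\beta_{1,t})\bm{g}_t$ isolates $\langle\bm{g}_t,\bm{w}_t-\bm{w}^*\rangle$, leaving a residual $\beta_{1,t}\langle\bm{m}_{t-1},\bm{w}_t-\bm{w}^*\rangle$ that I would split via a Young's inequality weighted by $\hat{v}_{t-1,i}/\hat{\alpha}_{t-1}$: the quadratic-in-distance piece merges into the telescoping sum and the quadratic-in-momentum piece feeds the $m_{t-1,i}^2/\hat{v}_{t-1,i}$ sum.

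Next, summing over $t\in[T]$, the telescoping contribution $\sum_{t,i}\hat{v}_{t,i}[(w_{t,i}-w^*_i)^2 - (w_{t+1,i}-w^*_i)^2]/\hat{\alpha}_t$ is Abel-rearranged into $\sum_{t,i}(\hat{v}_{t,i}/\hat{\alpha}_t - \hat{v}_{t-1,i}/\hat{\alpha}_{t-1})(w_{t,i}-w^*_i)^2$. The key observation is that $\hat{v}_{t,i}$ is non-decreasing in $t$: the assumption $b_{t,i}\le b_{t+1,i}$, together with the obvious monotonicity of $\delta\sqrt{1-\beta_{2}^t}$, makes the max monotone, so the differences remain nonnegative and the bounded-domain hypothesis $\|\bm{w}_t-\bm{w}^*\|_{\infty}\le D_{\infty}$ collapses the telescope. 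Combined with $\sqrt{b_{T,i}}\le G_{\infty}$ and $1/\hat{\alpha}_T \le \sqrt{T}/(\alpha\sqrt{1-\beta_{2}})$, this yields the $C_1\sqrt{T}$ piece. The momentum-squared sum $\sum_{t,i}\hat{\alpha}_t m_{t,i}^2/\hat{v}_{t,i}$ is bounded using $|m_{t,i}|\le G_{\infty}$ (Jensen on the EMA), the floor $\hat{v}_{t,i}\ge\delta\sqrt{1-\beta_{2}^t}\ge\delta\sqrt{1-\beta_{2}}$, and $\sum_{t=1}^{T}\hat{\alpha}_t \le 2\alpha\sqrt{T}$; together with a leftover $\hat{\alpha}_t\|\bm{g}_t\|^2$-style piece from the Young's split, this forms $C_2\sqrt{T}$. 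The middle term $\sum_t(\beta_{1,t}/(2\hat{\alpha}_t))nD_{\infty}^2$ emerges as the uncancelled momentum cross-term at the initial layer of the telescope, and the overall $1/(1-\beta_{1})$ prefactor arises from dividing through by $(1-\beta_{1,t})\ge 1-\beta_{1}$.

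The main obstacle is the non-smooth clipped denominator $\hat{v}_{t,i}$: unlike plain AMSGrad, $\hat{v}_{t,i}$ is not simply $\sqrt{b_{t,i}}$, so we cannot apply any identity that treats the square root of $b_{t,i}$ directly. Every estimate must be rephrased using only its two safe features, namely monotonicity in $t$ for the telescoping step and the uniform floor $\delta\sqrt{1-\beta_{2}^t}$ for the momentum step. Once these two properties are carefully extracted, the $\max$ behaves transparently and the remainder of the argument reduces to the convex AMSGrad analysis.
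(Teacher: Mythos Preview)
Your overall template matches the paper's proof: convexity reduces the regret to $\sum_t\langle\bm g_t,\bm w_t-\bm w^*\rangle$, you expand the weighted square distance to extract $\langle\bm m_t,\bm w_t-\bm w^*\rangle$, split off the $\beta_{1,t}\langle\bm m_{t-1},\bm w_t-\bm w^*\rangle$ residual, telescope the distance terms using the monotonicity of $\hat v_{t,i}/\hat\alpha_t$, and bound the momentum terms via the floor $\hat v_{t,i}\ge\delta\sqrt{1-\beta_2}$. Two points, however, are off.

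First, the cross term. The paper does \emph{not} use a $\hat v_{t-1,i}/\hat\alpha_{t-1}$-weighted Young's inequality that merges into the telescope; it uses the plain Euclidean split
\[
\beta_{1,t}\langle\bm w_t-\bm w^*,\bm m_{t-1}\rangle\;\le\;\frac{\beta_{1,t}}{2\hat\alpha_t}\|\bm w_t-\bm w^*\|^2+\frac{\beta_{1,t}\hat\alpha_t}{2}\|\bm m_{t-1}\|^2,
\]
and the first piece, bounded by $\frac{\beta_{1,t}}{2\hat\alpha_t}nD_\infty^2$, \emph{is} the middle term of the theorem. It does not come from any ``initial layer of the telescope.'' Your weighted split would produce $\frac{\beta_{1,t}\hat v_{t-1,i}}{2\hat\alpha_{t-1}}(w_{t,i}-w^*_i)^2$, which neither telescopes cleanly nor yields the stated $nD_\infty^2$ coefficient. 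The second piece is handled together with $\frac{\hat\alpha_t}{2}\|\bm m_t/\sqrt{\bm v_t}\|^2$ via the paper's Lemma~\ref{lem:2} bound on $\sum_t\hat\alpha_t\|\bm m_t\|^2$, which gives $C_2\sqrt T$.

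Second, your bound $\sqrt{b_{T,i}}\le G_\infty$ is wrong for AGD. Unlike Adam, $b_t$ is the EMA of $s_t^2$, where $s_t=\bm m_t/(1-\beta_{1,t}^t)-\bm m_{t-1}/(1-\beta_{1,t-1}^{t-1})$ involves bias-corrected momenta; the paper's Lemma~\ref{lem:3} shows only $\|\sqrt{\bm b_t}\|_\infty\le 2G_\infty/(1-\beta_1)^2$ and hence $\|\sqrt{\bm v_T}\|_1< n(2G_\infty+\delta)/(1-\beta_1)^2$. That extra $(1-\beta_1)^{-2}$ is exactly what appears in $C_1$, and you will not recover the stated constant with the Adam-style bound.
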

The proof of Theorem~\ref{th:convex} is given in Appendix~\ref{appendix:convex}.
To ensure that the condition $\max_{t\in[T]}\|\bm{w}_t - \bm{w}^*\|_{\infty}\leq D_{\infty}$ holds, we can assume that the domain $\mathcal{W}\subseteq\mathbb{R}^n$ is bounded and project the sequence $\{\bm{w}_t\}$ onto $\mathcal{W}$ by setting $\bm{w_{t+1}} = \Pi{}_{\mathcal{W}}\left(\bm{w}_t - \alpha_t \frac{\sqrt{1 - \beta_{2}^t}}{1 - \beta_{1}^t} \frac{\bm{m}_t}{\max(\sqrt{\bm{b}_t}, \delta\sqrt{1 - \beta_{2}^t})}\right)$. From Theorem~\ref{th:convex}, we have the following corollary.

\begin{corollary}
	Suppose $\beta_{1,t} = \beta_{1} / t$, we have
	\begin{equation*}
		\begin{aligned}
			\sum_{t=1}^{T}\left(f_t(\bm{w}_t) - f_t(\bm{w}^*)\right) < \frac{1}{1-\beta_{1}}\left(C_1\sqrt{T}  + \frac{nD_{\infty}^2\beta_{1}}{\alpha\sqrt{1-\beta_{2}}}\sqrt{T} + C_2\sqrt{T}\right),
		\end{aligned}
	\end{equation*}
	\label{coro:convex}
	where $C_1$ and $C_2$ are the same with Theorem~\ref{th:convex}.
\end{corollary}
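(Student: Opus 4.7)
The corollary is an immediate specialization of Theorem~\ref{th:convex}: the only term that actually depends on the choice $\beta_{1,t} = \beta_{1}/t$ is the middle sum $\sum_{t=1}^{T}\frac{\beta_{1,t}}{2\hat\alpha_t}nD_\infty^2$, while $C_1\sqrt T$ and $C_2\sqrt T$ carry over verbatim. The plan is therefore to isolate that middle sum, substitute $\beta_{1,t}=\beta_{1}/t$, bound $1/\hat\alpha_t$ in closed form using the schedule $\alpha_t = \alpha/\sqrt{t}$, and finish with the standard estimate $\sum_{t=1}^{T}1/\sqrt{t}\le 2\sqrt{T}$.

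More concretely, from the AGD update in Algorithm~\ref{alg:AGD} the effective step size used in the analysis of Theorem~\ref{th:convex} takes the form $\hat\alpha_t = \alpha_t\sqrt{1-\beta_{2}^t}/(1-\beta_{1}^t)$. Since $0<\beta_{1}<1$ gives $1-\beta_{1}^t\le 1$ and $\sqrt{1-\beta_{2}^t}\ge \sqrt{1-\beta_{2}}$ for all $t\ge 1$, we obtain the clean upper bound
\begin{equation*}
\frac{1}{\hat\alpha_t} \;\le\; \frac{1-\beta_{1}^t}{\alpha_t\sqrt{1-\beta_{2}^t}} \;\le\; \frac{1}{\alpha_t\sqrt{1-\beta_{2}}} \;=\; \frac{\sqrt{t}}{\alpha\sqrt{1-\beta_{2}}}.
\end{equation*}
Plugging $\beta_{1,t}=\beta_{1}/t$ into the middle term yields
\begin{equation*}
\sum_{t=1}^{T}\frac{\beta_{1,t}}{2\hat\alpha_t}nD_\infty^2 \;\le\; \frac{nD_\infty^2\beta_{1}}{2\alpha\sqrt{1-\beta_{2}}}\sum_{t=1}^{T}\frac{1}{t}\cdot\sqrt{t} \;=\; \frac{nD_\infty^2\beta_{1}}{2\alpha\sqrt{1-\beta_{2}}}\sum_{t=1}^{T}\frac{1}{\sqrt{t}} \;\le\; \frac{nD_\infty^2\beta_{1}}{\alpha\sqrt{1-\beta_{2}}}\sqrt{T},
\end{equation*}
where the last inequality uses $\sum_{t=1}^{T}t^{-1/2}\le 2\sqrt{T}$ (a standard integral comparison). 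Substituting this bound into Theorem~\ref{th:convex} gives exactly the claimed regret
$\frac{1}{1-\beta_{1}}\bigl(C_1\sqrt T + \frac{nD_\infty^2\beta_{1}}{\alpha\sqrt{1-\beta_{2}}}\sqrt T + C_2\sqrt T\bigr).$

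The only subtlety worth flagging is making sure the constant in the bound on $1/\hat\alpha_t$ matches the $1/\sqrt{1-\beta_{2}}$ that appears in the corollary rather than a looser $1/(1-\beta_{2})$ type factor; this forces the use of $\sqrt{1-\beta_{2}^t}\ge\sqrt{1-\beta_{2}}$ rather than a cruder $(1-\beta_{2}^t)\ge(1-\beta_{2})$ step. Beyond that, the argument is purely mechanical: no new assumption on $f_t$, the iterate bound $D_\infty$, or the gradient bound $G_\infty$ is needed, since all of these are already in place from Theorem~\ref{th:convex}, and $C_1,C_2$ are inherited unchanged. Thus the corollary reduces to the one-line substitution described above, with the harmonic-to-$\sqrt{T}$ conversion being the only real calculation.
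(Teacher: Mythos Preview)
Your proposal is correct and follows essentially the same approach as the paper: bound $1/\hat\alpha_t$ by $\sqrt{t}/(\alpha\sqrt{1-\beta_2})$, substitute $\beta_{1,t}=\beta_1/t$ so the sum becomes $\tfrac{\beta_1}{2\alpha\sqrt{1-\beta_2}}\sum_{t=1}^T t^{-1/2}$, and finish with $\sum_{t=1}^T t^{-1/2}<2\sqrt{T}$. One minor notational slip: in the analysis of Theorem~\ref{th:convex} the effective step size is $\hat\alpha_t=\alpha_t\sqrt{1-\beta_2^t}/(1-\beta_{1,t}^t)$ with the time-varying $\beta_{1,t}$, not the fixed $\beta_1$; your bound $1-\beta_{1,t}^t\le 1$ still holds, so this does not affect correctness.
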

The proof of Corollary~\ref{coro:convex} is given in Appendix~\ref{appendix:convex_coro}. Corollary~\ref{coro:convex} implies the regret is $O(\sqrt{T})$ and can achieve the convergence rate $O(1/\sqrt{T})$ in convex settings.

	\section{Conclusion}
\label{sec:5}

In this paper, we introduce a novel optimizer, AGD, which incorporates the Hessian information into the preconditioning matrix and allows seamless switching between SGD and the adaptive optimizer. We provide theoretical convergence rate proofs for both non-convex and convex stochastic settings and conduct extensive empirical evaluations on various real-world datasets. The results demonstrate that AGD outperforms other optimizers in most cases, resulting in significant performance improvements. Additionally, we analyze the mechanism that enables AGD to automatically switch between stochastic and adaptive optimization and investigate the impact of the hyperparameter $\delta$ for this process.
	
	\section*{Acknowledgement}
	We thank Yin Lou for his meticulous review of our manuscript and for offering numerous valuable suggestions.

	\small
	\bibliographystyle{plainnat}
	\bibliography{reference}

	\appendix
	\clearpage
	\section*{Appendix}
\section{Details of experiments}
\subsection{Configuration of optimizers}
\label{appendix:config_hyper}

In this section, we provide a thorough description of the hyperparameters used for different optimizers across various tasks. For optimizers other than AGD, we adopt the recommended parameters for the identical experimental setup as indicated in the literature of AdaHessian \cite{adahessian} and AdaBelief \cite{adabelief}. In cases where these recommendations are not accessible, we perform a hyperparameter search to determine the optimal hyperparameters.

\paragraph{NLP}
\begin{itemize}[leftmargin=10pt]
	\item SGD/Adam/AdamW: For the NMT task, we report the results of SGD from AdaHessian \cite{adahessian}, and search learning rate among \{5e-5, 1e-4, 5e-4, 1e-3\} and epsilon in \{1e-12, 1e-10, 1e-8, 1e-6, 1e-4\} for Adam/AdamW, and for both optimizers the optimal learning rate/epsilon is 5e-4/1e-12. For the LM task, we follow the settings from AdaBelief \cite{adabelief}, setting learning rate to 30 for SGD and 0.001 for Adam/AdamW while epsilon to 1e-12 for Adam/AdamW when training 1-layer LSTM. For 2-layer LSTM, we conduct a similar search with learning rate among \{1e-4, 5e-4, 1e-3, 1e-2\} and epsilon within \{1e-12, 1e-10, 1e-8, 1e-6, 1e-4\}, and the best parameters are learning rate = 1e-2 and epsilon = 1e-8/1e-4 for Adam/AdamW. For 3-layer LSTM, learning rate = 1e-2 and epsilon = 1e-8 are used for Adam/AdamW.
	\item AdaBelief: For the NMT task we use the recommended configuration from the latest implementation\footnote{https://github.com/juntang-zhuang/Adabelief-Optimizer} for transformer. We search learning rate in \{5e-5, 1e-4, 5e-4, 1e-3\} and epsilon in \{1e-16, 1e-14, 1e-12, 1e-10, 1e-8\}, and the best configuration is to set learning rate as 5e-4 and epsilon as 1e-16. We adopt the same LSTM experimental setup for the LM task and reuse the optimal settings provided by AdaBelief \cite{adabelief}, except for 2-layer LSTM, where we search for the optimial learning rate in \{1e-4, 5e-4, 1e-3, 1e-2\} and epsilon in \{1e-16, 1e-14, 1e-12, 1e-10, 1e-8\}. However, the best configuration is identical to the recommended.
	\item AdaHessian: For the NMT task, we adopt the same experimental setup as in the official implementation.\footnote{https://github.com/amirgholami/adahessian} For LM task, we search the learning rate among \{1e-3, 1e-2, 0.1, 1\} and hessian power among \{0.5, 1, 2\}. We finally select 0.1 for learning rate and 0.5 for hessian power for 1-layer LSTM, and 1.0 for learning rate and 0.5 for for hessian power for 2,3-layer LSTM. Note that AdaHessian appears to overfit when using learning rate 1.0. Accordingly, we also try to decay its learning rate at the 50th/90th epoch, but it achieves a similar PPL.
	\item AGD: For the NMT task, we search learning rate among \{5e-5, 1e-4, 5e-4, 1e-3\} and $\delta$ among \{1e-16, 1e-14, 1e-12, 1e-10, 1e-8\}. We report the best result with learning rate 5e-5 and $\delta$ as 1e-14 for AGD. For the LM task, we search learning rate among \{1e-4, 5e-4, 1e-3, 5e-3, 1e-2\} and $\delta$ from 1e-16 to 1e-4, and the best settings for learning rate ($\delta$) is 5e-4 (1e-10) and 1e-3 (1e-5) for 1-layer LSTM (2,3-layer LSTM).
\end{itemize}
The weight decay is set to 1e-4 (1.2e-6) for all optimizers in the NMT (LM) task. For adaptive optimizers, we set $(\beta_1, \beta_2)$ to (0.9, 0.98) in the NMT task and (0.9, 0.999) in the LM task. For the LM task, the general dropout rate is set to 0.4.

\paragraph{CV}
\begin{itemize}[leftmargin=10pt]
	\item SGD/Adam/AdamW: We adopt the same experimental setup in AdaHessian \cite{adahessian}. For SGD, the initial learning rate is 0.1 and the momentum is set to 0.9. For Adam, the initial learning rate is set to 0.001 and the epsilon is set to 1e-8. For AdamW, the initial learning rate is set to 0.005 and the epsilon is set to 1e-8.
	\item AdaBelief: We explore the best learning rate for ResNet20/32 on Cifar10 and ResNet18 on ImageNet, respectively. Finally, the initial learning rate is set to be 0.01 for ResNet20 on Cifar10 and 0.005 for ResNet32/ResNet18 on Cifar10/ImageNet. The epsilon is set to 1e-8.
	\item AdaHessian: We use the recommended configuration as much as possible from AdaHessian \cite{adahessian}.
		The Hessian power is set to 1. The initial learning rate is 0.15 when training on both Cifar10 and ImageNet, as recommended in \citet{adahessian}. The epsilon is set to 1e-4.
	\item AGD: We conduct a grid search of $\delta$ and the learning rate. The choice of $\delta$ is among \{1e-8, 1e-7, 1e-6, 1e-5, 1e-4, 1e-3, 1e-2\} and the search range for the learning rate is from 1e-4 to 1e-2.
	Finally, we choose the learning rate to 0.007 and $\delta$ to 1e-2 for Cifar10 task, and learning rate to 0.0004 and $\delta$ to 1e-5 for ImageNet task.
\end{itemize}
The weight decay for all optimizers is set to 0.0005 on Cifar10 and 0.0001 on ImageNet. $\beta_1 = 0.9$ and $\beta_2 = 0.999$ are for all adaptive optimizers.

\paragraph{RecSys}
Note that we implement the optimizers for training on our internal distributed environment.
\begin{itemize}[leftmargin=10pt]
	\item SGD: We search for the learning rate among \{1e-4, 1e-3, 1e-2, 0.1, 1\} and choose the best results (0.1 for the Avazu task and 1e-3 for the Criteo task).
	\item Adam/AdaBelief: We search the learning rate among \{1e-5, 1e-4, 1e-3, 1e-2\} and the epsilon among \{1e-16, 1e-14, 1e-12, 1e-10, 1e-8, 1e-6\}.
	For the Avazu task, the best learning rate/epsilon is 1e-4/1e-8 for Adam and 1e-4/1e-16 for AdaBelief. For the Criteo task, the best learning rate/epsilon is 1e-3/1e-8 for Adam and 1e-3/1e-16 for AdaBelief.
	\item AdaHessian: We search the learning rate among \{1e-5, 1e-4, 1e-3, 1e-2\} and the epsilon among \{1e-16, 1e-14, 1e-12, 1e-10, 1e-8, 1e-6\}.
	The best learning rate/epsilon is 1e-4/1e-8 for the Avazu task and 1e-3/1e-6 for the Criteo task. The block size and the Hessian power are set to 1.
	\item AGD: We search the learning rate among \{1e-5, 1e-4, 1e-3\} and $\delta$ among \{1e-12, 1e-10, 1e-8, 1e-6, 1e-4, 1e-2\}. The best learning rate/$\delta$ is 1e-4/1e-4 for the Avazu task and 1e-4/1e-8 for the Criteo task.
\end{itemize}
$\beta_1 = 0.9$ and $\beta_2 = 0.999$ are for all adaptive optimizers.

\subsection{AGD vs. AdaBound}
\label{appendix:adabound}
\begin{table}[!htbp]
	\caption{The performance of AGD and AdaBound across different tasks.}
	\label{tab:AGD_vs_adabound}
	\centering
	\begin{tabular}{lcc}
		\toprule
		Optimizer   & 3-Layer LSTM, test PPL (lower is better) & ResNet18 on ImageNet, Top-1 accuracy   \\
		\midrule
		AdaBound  & 63.60 \cite{adabelief}  & 68.13 (100 epochs) \cite{close_general}        \\
		AGD  & \textbf{60.89} (better)  & \textbf{70.19} (90 epochs, still better) \\
		\bottomrule
	\end{tabular}
\end{table}


\subsection{Numerical Experiments}
\label{appendix:num_exp}

In our numerical experiments, we employ the same learning rate across optimizers. While a larger learning rate could accelerate convergence, as shown in Figures~\ref{fig:num-exp-adam} and~\ref{fig:num-exp-AdaDQH}, we also note that it could lead to unstable training.
To investigate the largest learning rate that each optimizer could handle for the Beale function, we perform a search across the range of \{1e-5, 1e-4, ..., 1, 10\}. The optimization trajectories are displayed in Figure~\ref{fig:num-exp-best}, and AGD demonstrates slightly superior performance. Nonetheless, we argue that the learning rate selection outlined in Section~\ref{subsec:num} presents a more appropriate representation.

\begin{figure}[!htbp]
	\centering
	\subcaptionbox{Adam\label{fig:num-exp-adam}}[.32\textwidth]{
		\includegraphics[width=\linewidth]{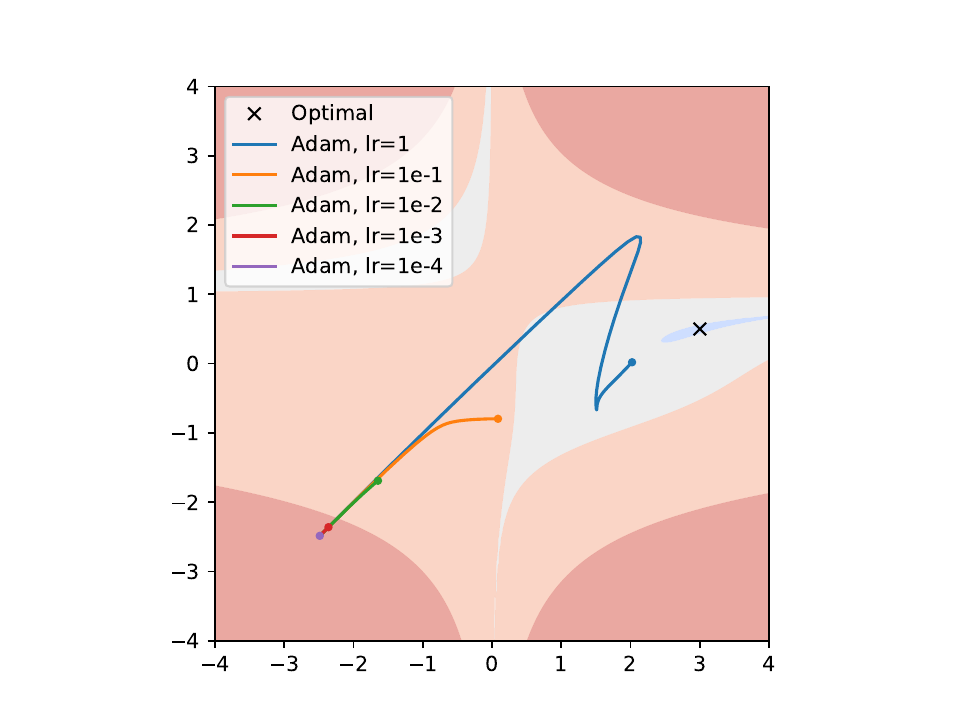}
	}
	\subcaptionbox{AGD\label{fig:num-exp-AdaDQH}}[.32\textwidth]{
		\includegraphics[width=\linewidth]{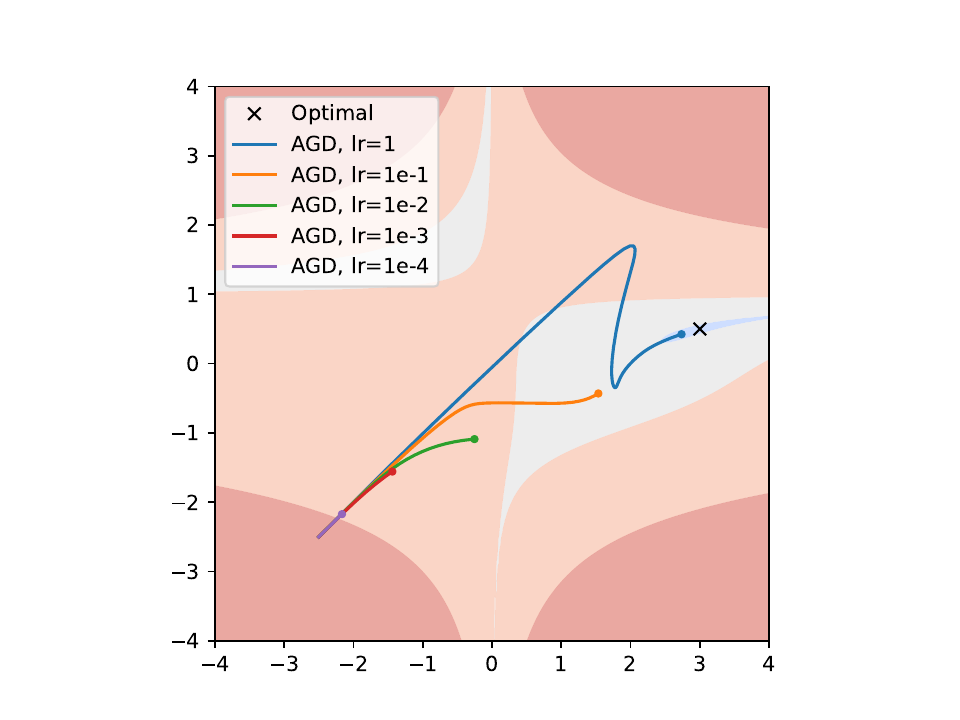}
	}
	\subcaptionbox{Comparison\label{fig:num-exp-best}}[.32\textwidth]{
		\includegraphics[width=\linewidth]{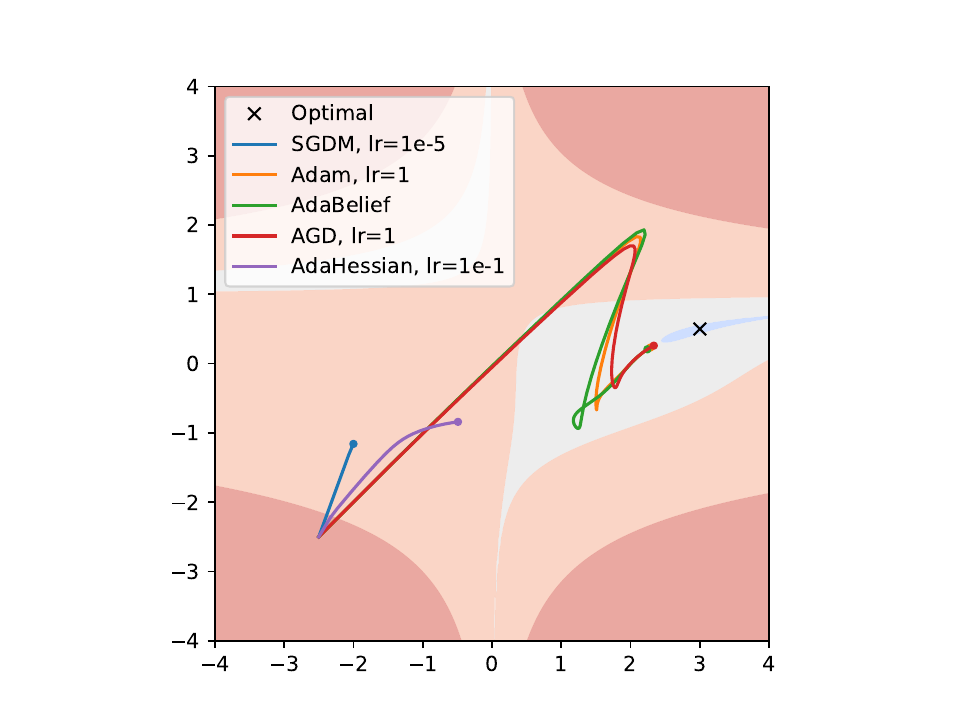}
	}
	\caption{Optimization trajectories on Beale function using various learning rates.}
	\label{fig:num-exp-appendix-fig}
\end{figure}

\subsection{AGD with AMSGrad condition}
\label{appendix:amsgrad_condition}
Algorithm~\ref{alg:AGD_amsgrad} summarizes the AGD optimizer with AMSGrad condition. The AMSGrad condition is usually used to ensure the convergence. We empirically show that adding AMSGrad condition to AGD will slightly degenerate AGD's performance, as listed in Table~\ref{tab:AGD with amsgrad}.
For AGD with AMSGrad condition, we search for the best learning rate among \{0.001, 0.003, 0.005, 0.007, 0.01\} and best $\delta$ within \{1e-2, 1e-4, 1e-6, 1e-8\}.We find the best learning rate is 0.007 and best $\delta$ is 1e-2, which are the same as AGD without AMSGrad condition.

\begin{algorithm}[!htbp]
	\caption{AGD with AMSGrad condition}
	\label{alg:AGD_amsgrad}
	\begin{algorithmic}[1]
		\STATE {\bfseries Input:} parameters $\beta_{1}$, $\beta_{2}$, $\delta$,
		$\bm{w}_1 \in \mathbb{R}^n$, step size $\alpha_t$, initialize $\bm{m}_0=\mathbf{0}, \bm{b}_0=\mathbf{0}$
		\FOR{$t=1$ {\bfseries to} $T$}
		\STATE $\bm{g}_t = \nabla f_t(\bm{w}_t)$
		\STATE $\bm{m}_t \leftarrow \beta_{1} \bm{m}_{t-1} + (1 - \beta_{1})\bm{g}_{t}$
		\STATE $\bm{s}_t = \left\{
		\begin{array}{ll}
			\bm{m}_1 / (1 - \beta_1) & t = 1\\
			\bm{m}_t / (1 - \beta_1^t) - \bm{m}_{t-1} / (1 - \beta_1^{t - 1}) & t > 1
		\end{array} \right.$
		\STATE $\bm{b}_t \leftarrow \beta_{2}\bm{b}_{t - 1} + (1 - \beta_{2}) \bm{s}_t^2$
		\STATE $\bm{b}_t \leftarrow \max(\bm{b}_t, \bm{b}_{t-1})$ \qquad\qquad       // AMSGrad condtion
		\STATE $\bm{w}_{t+1} = \bm{w}_t - \alpha_t\frac{\sqrt{1 - \beta_{2}^t}}{1 - \beta_{1}^t} \frac{\bm{m}_t}{\max(\sqrt{\bm{b}_t}, \delta\sqrt{1 - \beta_{2}^t})}$
		\ENDFOR
	\end{algorithmic}
\end{algorithm}

\begin{table}[!htbp]
	\caption{Top-1 accuracy for AGD with and without AMSGrad condition when trained with ResNet20 on Cifar10.}
	\label{tab:AGD with amsgrad}
	\centering
	\begin{tabular}{lcc}
		\toprule
		Optimizer    & AGD                       & AGD + AMSGrad           \\
		\midrule
		Accuracy        & \bm{$92.35 \pm .24$}         & $92.25 \pm 0.11$         \\
		\bottomrule
	\end{tabular}
\end{table}


\subsection{AGD for ResNet18 on Cifar10}
\label{appendix:cifar10}
Since the SOTA accuracy \footnote{https://paperswithcode.com/sota/stochastic-optimization-on-cifar-10-resnet-18} for ResNet18 on Cifar10 is $95.55\%$ using SGD optimizer with a cosine learning rate schedule, we also test the performance of AGD for ResNet18 on Cifar10. We find AGD can achieve $95.79\%$ accuracy at lr = 0.001 and $\delta$ = 1e-2 when using the same training configuration as the experiment of SGD in \citet{benchopt}, which exceeds the current SOTA result.

\subsection{Robustness to hyperparameters}
\label{appendix:robustness_hyper}

We test the performance of AGD and Adam with respect to $\delta$ (or $\epsilon$) and learning rate. The experiments are performed with ResNet20 on Cifar10 and the results are shown in Figure~\ref{fig:robustness_hyper}. Compared to Adam, AGD shows better robustness to the change of hyperparameters.

\begin{figure}[H]
	\vskip -5pt
	\centering
	\subcaptionbox{AGD}[.42\textwidth]{
		\includegraphics[width=\linewidth]{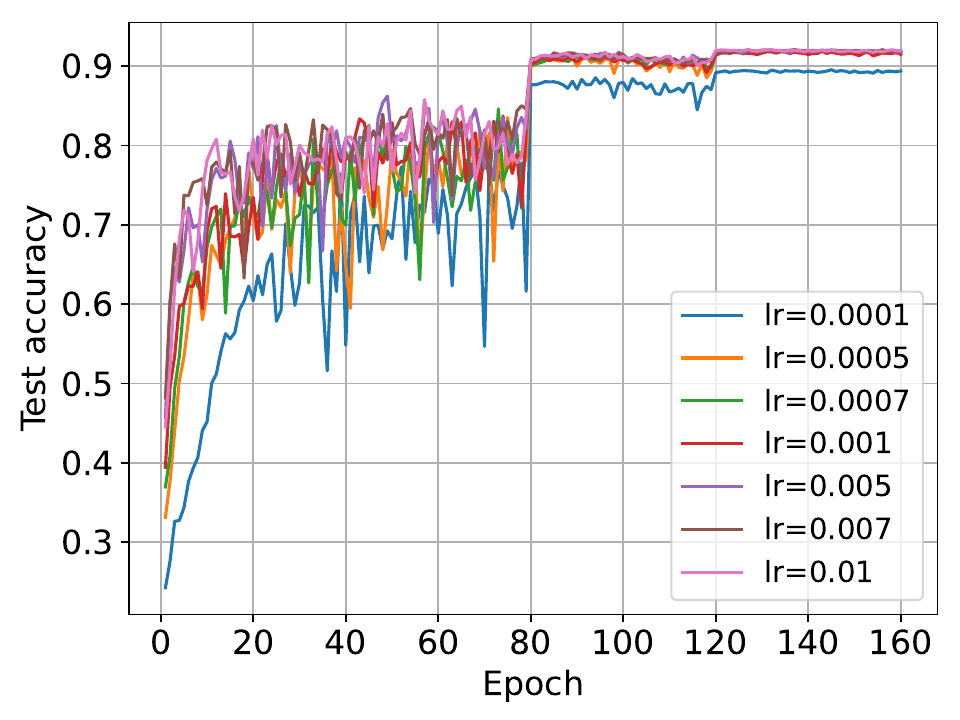}
	}
	\subcaptionbox{Adam}[.42\textwidth]{
		\includegraphics[width=\linewidth]{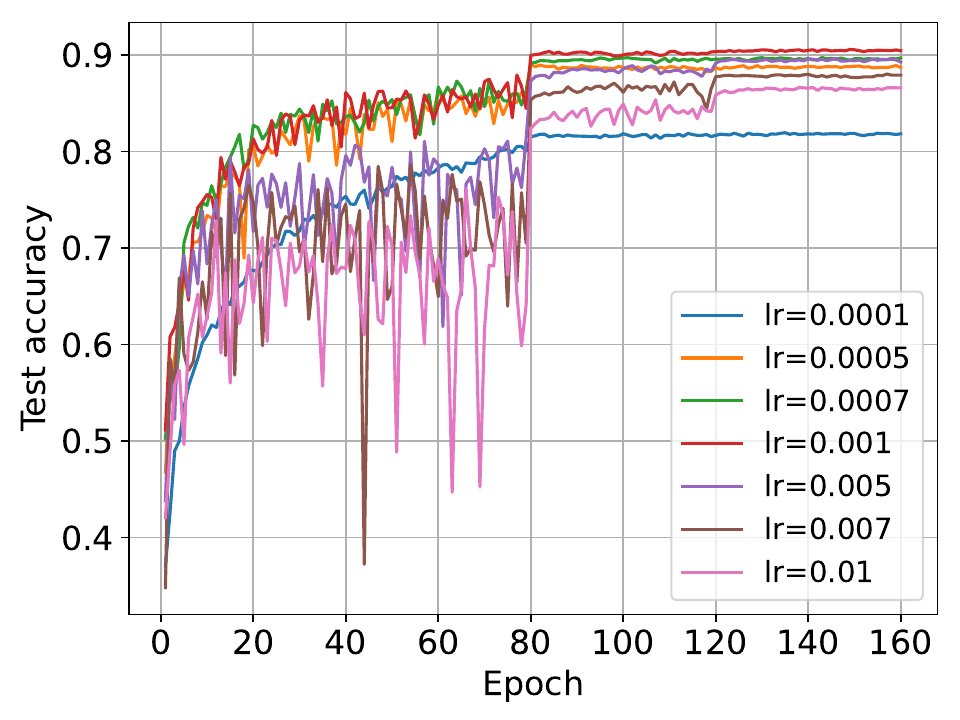}
	}
	\subcaptionbox{AGD}[.42\textwidth]{
		\includegraphics[width=\linewidth]{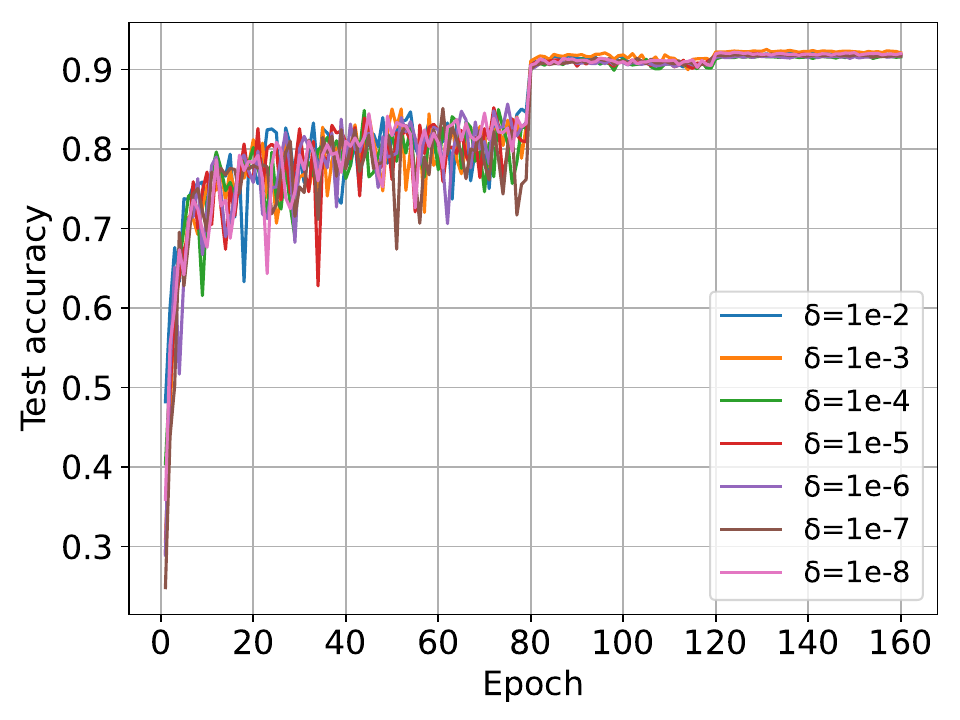}
	}
	\subcaptionbox{\label{fig:adam_epsilon}Adam}[.42\textwidth]{
		\includegraphics[width=\linewidth]{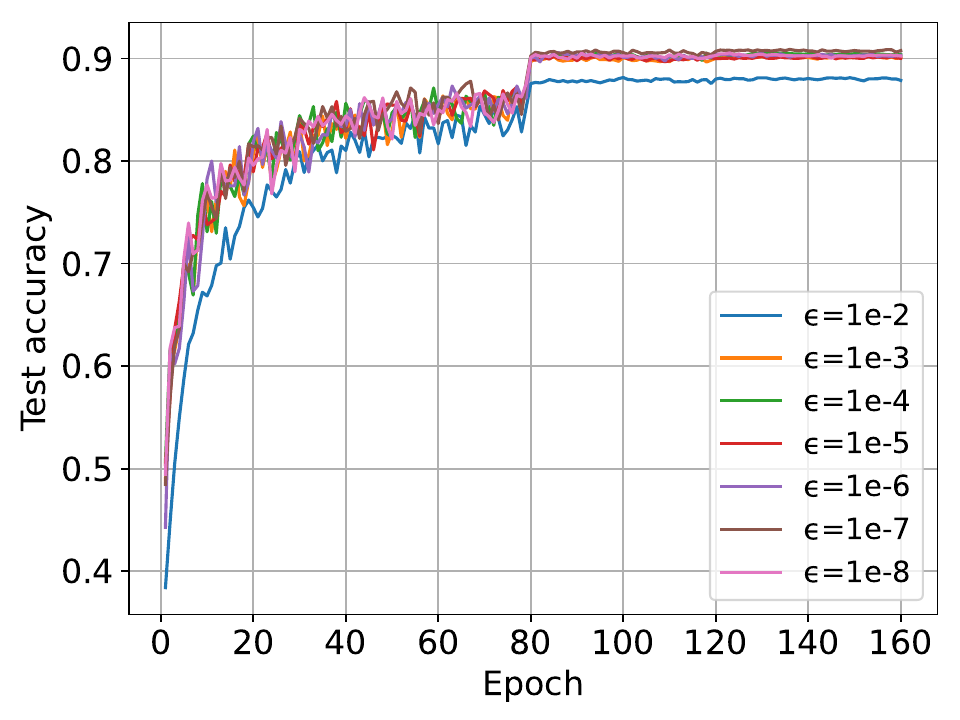}
	}

	\caption{Test accuracy of ResNet20 on Cifar10, trained with AGD and Adam using different $\delta$ (or $\epsilon$) and learning rate. For (a) and (b), we choose learning rate as 0.007 and 0.001, respectively. For (c) and (d), we set $\delta$ (or $\epsilon$) to be 1e-2 and 1e-8, respectively.}
	\label{fig:robustness_hyper}
\end{figure}

\section{Proof of Theorem~\ref{th:non-convex}}
\label{appendix:non-convex}
\begin{proof}
	Denote $\hat{\alpha}_t = \alpha_t \frac{\sqrt{1-\beta_{2}^t}}{1-\beta_{1, t}^t}$ and $\bm{v}_t = \max(\sqrt{\bm{b}_t}, \delta\sqrt{1 - \beta_{2}^t})$, we have the following lemmas.
		\begin{lemma}
		For the parameter settings and assumptions in Theorem~\ref{th:non-convex} or Theorem~\ref{th:convex}, we have
		\begin{equation*}
			\hat{\alpha}_t > \hat{\alpha}_{t+1}, \ t\in[T],
		\end{equation*}
		where $\hat{\alpha}_t = \alpha_t \frac{\sqrt{1-\beta_{2}^t}}{1-\beta_{1,t}^t}$.
		\label{lem:1}
	\end{lemma}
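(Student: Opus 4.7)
I would rewrite the target $\hat{\alpha}_t > \hat{\alpha}_{t+1}$ as $\hat{\alpha}_{t+1}/\hat{\alpha}_t < 1$ and decompose the ratio into three factors that can each be bounded individually from the hypotheses of Theorem~\ref{th:non-convex} (the same hypotheses on $\alpha_t$, $\beta_{1,t}$, $\beta_2$ are assumed in Theorem~\ref{th:convex}). Substituting $\alpha_t=\alpha/\sqrt{t}$ into the definition of $\hat{\alpha}_t$ gives
$$\frac{\hat{\alpha}_{t+1}}{\hat{\alpha}_t} \;=\; \sqrt{\tfrac{t}{t+1}}\;\cdot\;\sqrt{\tfrac{1-\beta_2^{t+1}}{1-\beta_2^t}}\;\cdot\;\tfrac{1-\beta_{1,t}^t}{1-\beta_{1,t+1}^{t+1}},$$
so the task reduces to bounding these three factors.

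The third factor is the cheapest: since $\beta_{1,t+1}\le\beta_{1,t}\in[0,1)$ by assumption, we have $\beta_{1,t+1}^{t+1}\le\beta_{1,t}^{t+1}\le\beta_{1,t}^t$, hence $1-\beta_{1,t}^t\le 1-\beta_{1,t+1}^{t+1}$, so the factor is $\le 1$. The first factor is trivially $<1$. The main work is therefore to show the product of the first two factors is strictly less than $1$, which after squaring and clearing denominators is equivalent to $t(1-\beta_2^{t+1})<(t+1)(1-\beta_2^t)$, i.e.,
$$t\beta_2^t(1-\beta_2) \;<\; 1-\beta_2^t.$$

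I would close this last step using the geometric identity $1-\beta_2^t=(1-\beta_2)\sum_{k=0}^{t-1}\beta_2^k$: for $\beta_2\in(0,1)$ each summand satisfies $\beta_2^k>\beta_2^t$ whenever $0\le k<t$, so summing these $t$ strict inequalities gives $(1-\beta_2)\sum_{k=0}^{t-1}\beta_2^k > t\beta_2^t(1-\beta_2)$, which is exactly what we need; the boundary case $\beta_2=0$ is immediate since the left side vanishes and the right side equals $1$. Multiplying the three bounded factors then yields $\hat{\alpha}_{t+1}/\hat{\alpha}_t<1$, establishing the lemma.

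The main obstacle I anticipate is simply keeping the strictness in the second-factor argument so that it combines cleanly with the (possibly non-strict) $\le 1$ bound on the third factor; the geometric-sum comparison handles this, and the rest is bookkeeping with the hypotheses.
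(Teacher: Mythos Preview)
Your proposal is correct. The overall decomposition matches the paper's: both first dispose of the $\beta_{1}$-factor by observing that $\beta_{1,t+1}^{t+1}\le\beta_{1,t}^{t}$ (so $1/(1-\beta_{1,t}^t)$ is non-increasing), and then reduce to showing that $\phi(t)=\alpha_t\sqrt{1-\beta_2^t}$ is strictly decreasing in $t$.

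The only genuine difference is how that last step is handled. The paper treats $t$ as a continuous variable, computes $\phi'(t)$, and reduces to showing $\psi(t)=1-\beta_2^t+t\beta_2^t\log\beta_2>0$ via $\psi(0)=0$ and $\psi'(t)=t\beta_2^t(\log\beta_2)^2>0$. You instead work discretely: squaring and clearing denominators yields $t\beta_2^t(1-\beta_2)<1-\beta_2^t$, which you prove by the geometric-sum identity and the termwise strict inequality $\beta_2^k>\beta_2^t$ for $k<t$. Your route is more elementary (no calculus, no continuous extension) and keeps the strictness transparent, while the paper's derivative argument is slightly slicker once one accepts differentiating in $t$. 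Either way the $\beta_2=0$ case is immediate, and combining with the non-strict bound on the $\beta_1$-factor and the strict bound on $\sqrt{t/(t+1)}$ gives the claimed strict inequality.
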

	\begin{proof}
		Since $\beta_{1, t}$ is non-increasing with respect to $t$, we have $\beta_{1, t+1}^{t+1} \leq \beta_{1, t+1}^{t} \leq \beta_{1, t}^{t}$. Hence, $\beta_{1, t}^{t}$ is non-increasing and $\frac{1}{1 - \beta_{1, t}^t}$ is non-increasing. Thus, we only need to prove $\phi(t) = \alpha_{t}\sqrt{1 - \beta_{2}^t}$ is decreasing. Since the proof is trivial when $\beta_{2} = 0$, we only need to consider the case where $\beta_{2}\in (0, 1)$. Taking the derivative of $\phi(t)$, we have
		\begin{equation*}
			\phi{'}(t) = -\frac{\alpha}{2}t^{-\frac{3}{2}}(1 - \beta_{2}^t)^{\frac{1}{2}} -\frac{\alpha}{2}t^{-\frac{1}{2}}(1 - \beta_{2}^t)^{-\frac{1}{2}}\beta_{2}^{t}\log_{e}\beta_{2} = -\frac{\alpha}{2}t^{-\frac{3}{2}}(1 - \beta_{2}^t)^{-\frac{1}{2}}(\underbrace{1 - \beta_2^t + t\beta_2^t\log_{e}\beta_{2}}_{\psi(t)}).
		\end{equation*}
		Since
		\begin{equation*}
			\psi{'}(t) = -\beta_{2}^{t}\log_{e}\beta_{2} + \beta_{2}^{t}\log_{e}\beta_{2} + t\beta_{2}^{t}(\log_{e}\beta_{2})^2 = t\beta_{2}^{t}(\log_{e}\beta_{2})^2,
		\end{equation*}
		we have  $\psi{'}(t) > 0$ when $t > 0$ and $\psi{'}(t) = 0$ when $t = 0$.
	    Combining $\psi(0) = 0$,  we get $\psi(t) > 0$ when $t > 0$. Thus, we have $\phi{'}(t) < 0$, which completes the proof.
	\end{proof}

	\begin{lemma}
		For the parameter settings and assumptions in Theorem~\ref{th:non-convex} or Theorem~\ref{th:convex}, we have
		\begin{equation*}
			\|\sqrt{\bm{v}_t}\|^2 < \frac{n(2G_{\infty} + \delta)}{(1 - \beta_{1})^2},\ t\in[T],
		\end{equation*}
		where $\bm{v}_t = \max(\sqrt{\bm{b}_t}, \delta\sqrt{1 - \beta_{2}^t})$.
		\label{lem:3}
	\end{lemma}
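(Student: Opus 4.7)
The plan is to bound $v_{t,i} = \max(\sqrt{b_{t,i}},\,\delta\sqrt{1-\beta_{2}^t})$ uniformly over $i\in[n]$ and then sum across the $n$ coordinates, since $\|\sqrt{\bm{v}_t}\|^2 = \sum_{i=1}^{n} v_{t,i}$. Bounding $v_{t,i}$ cascades into bounding $\sqrt{b_{t,i}}$, which cascades into bounding $\|\bm{s}_k\|_\infty$ for $k\leq t$, which finally cascades into bounding the bias-corrected EMA $\|\bm{m}_k/(1-\beta_{1,k}^k)\|_\infty$. So the proof is a sequence of four $\ell_\infty$ bounds, each feeding into the next, plus a summation at the end.

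The first substantive step is the uniform bound on the EMA. I would unroll $\bm{m}_t = \sum_{i=1}^{t}(1-\beta_{1,i})\!\left(\prod_{j=i+1}^{t}\beta_{1,j}\right)\!\bm{g}_i$, use $\beta_{1,j}\leq\beta_{1}$ to replace the product by $\beta_{1}^{t-i}$, use $1-\beta_{1,i}\leq 1$, and apply the assumed bound $\|\bm{g}_i\|_\infty\leq G_\infty$. A geometric-series argument then gives $\|\bm{m}_t\|_\infty\leq G_\infty/(1-\beta_{1})$. Combining this with the elementary lower bound $1-\beta_{1,t}^t\geq 1-\beta_{1}^t\geq 1-\beta_{1}$ (which uses only $\beta_{1,t}\leq\beta_{1}$ and $t\geq 1$) yields the bias-corrected estimate $\|\bm{m}_t/(1-\beta_{1,t}^t)\|_\infty\leq G_\infty/(1-\beta_{1})^2$. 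The triangle inequality applied to the definition of $\bm{s}_t$ then gives $\|\bm{s}_t\|_\infty\leq 2G_\infty/(1-\beta_{1})^2$ for $t\geq 2$, and the $t=1$ case is immediate since $\bm{s}_1 = \bm{g}_1$.

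The rest is routine. Unrolling the EMA recursion for $\bm{b}_t$ and substituting the entrywise bound $s_{k,i}^2\leq 4G_\infty^2/(1-\beta_{1})^4$ yields $b_{t,i}\leq 4G_\infty^2(1-\beta_{2}^t)/(1-\beta_{1})^4$, so $\sqrt{b_{t,i}}<2G_\infty/(1-\beta_{1})^2$. Since $\delta\sqrt{1-\beta_{2}^t}\leq\delta$ and $\max(a,b)\leq a+b$ for $a,b\geq 0$, we obtain $v_{t,i}<2G_\infty/(1-\beta_{1})^2+\delta\leq (2G_\infty+\delta)/(1-\beta_{1})^2$, where the last step uses $(1-\beta_{1})^2\leq 1$ to absorb the free $\delta$. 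Summing over $i\in[n]$ finishes the proof.

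The main (and essentially only) subtlety is the accommodation of the time-varying $\beta_{1,t}$ in Theorems 1 and 2: one factor of $(1-\beta_{1})^{-1}$ arises in the EMA unrolling because the $1-\beta_{1,i}$ weights cannot be cancelled uniformly, and a second factor arises from the bias-correction lower bound. In the constant-$\beta_{1}$ regime the tighter estimate $n(2G_\infty+\delta)$ would suffice, so the lemma is trading sharpness for generality consistent with the $\beta_{1,t}$ schedule used later.
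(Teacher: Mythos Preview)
Your proposal is correct and follows essentially the same cascade as the paper: bound $\|\bm{m}_t\|_\infty\le G_\infty/(1-\beta_1)$ by unrolling the EMA with $\beta_{1,j}\le\beta_1$, bound $\|\bm{s}_t\|_\infty\le 2G_\infty/(1-\beta_1)^2$ via the triangle inequality and the bias-correction lower bound, then $\|\bm{b}_t\|_\infty\le 4G_\infty^2/(1-\beta_1)^4$, and finally $v_{t,i}<\|\sqrt{\bm{b}_t}\|_\infty+\delta\le(2G_\infty+\delta)/(1-\beta_1)^2$ summed over coordinates. The paper does exactly this, with the only cosmetic difference being that for $t=1$ it bounds $\|\bm{s}_1\|_\infty\le G_\infty/(1-\beta_1)^2<2G_\infty/(1-\beta_1)^2$ rather than observing $\bm{s}_1=\bm{g}_1$ as you do.
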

	\begin{proof}
		\begin{equation*}
			\begin{aligned}
				\|\bm{m}_t\|_{\infty} = & \|\sum_{i=1}^{t}(1 - \beta_{1, t-i+1})\bm{g}_{t-i+1}\prod_{j=1}^{i - 1}\beta_{1, t - j + 1}\|_{\infty} \leq \|\sum_{i=1}^{t}\bm{g}_{t-i+1}\beta_{1}^{i - 1}\|_{\infty} \leq \frac{G_{\infty}}{1 - \beta_{1}}, \\
				\|\bm{s}_t\|_{\infty} \leq & \left\{
				\begin{array}{ll}
					\frac{\|\bm{m}_1\|_{\infty}}{1 - \beta_{1,t}} \leq \frac{G_{\infty}}{(1 - \beta_{1})^2} < \frac{2G_{\infty}}{(1 - \beta_{1})^2} & t = 1,\\
					\frac{\|\bm{m}_t\|_{\infty}}{1 - \beta_{1, t}^t} + \frac{\|\bm{m}_{t-1}\|_{\infty}}{1 - \beta_{1, t}^{t-1}} \leq \frac{2G_{\infty}}{(1 - \beta_{1})^2}& t > 1,
				\end{array} \right. \\
				\|\bm{b}_t\|_{\infty} = & \|(1 - \beta_{2})\sum_{i=1}^{t}\bm{s}_{t-i+1}^2\beta_{2}^{i-1}\|_{\infty} \leq \frac{4G_{\infty}^2}{(1 - \beta_{1})^4}, \\
				\|\sqrt{\bm{v}_t}\|^2 = & \sum_{i=1}^{n} v_{t, i} < n(\|\sqrt{\bm{b}_t}\|_{\infty} + \delta) \leq \frac{n(2G_{\infty} + \delta)}{(1 - \beta_{1})^2}.
			\end{aligned}
		\end{equation*}
	\end{proof}

	By assumptions 2, 4, Lemma~\ref{lem:1} and Lemma~\ref{lem:3}, $\forall{}t\in[T]$, we have
	\begin{equation}
		\begin{aligned}
			& \|\bm{m}_t\|_{\infty}\leq \frac{G_{\infty}}{1 - \beta_{1}},\quad \|\bm{v}_t\|_{\infty}\leq\frac{2G_{\infty}}{(1 - \beta_{1})^2}, \quad \frac{\hat{\alpha}_t}{v_{t,i}} > \frac{\hat{\alpha}_{t + 1}}{v_{t+1, i}},\ \forall{}i\in[n].\\
		\end{aligned}
	\label{eq:th2_0}
	\end{equation}
	Following \citet{momentum_convergence, nonconvex_convergence, nonconvex_convergence2}, we define an auxiliary sequence $\{\bm{u}_t\}$:
	$\forall{}t\geq 2$,
	\begin{equation}
		\bm{u}_t = \bm{w}_t + \frac{\beta_{1,t}}{1 - \beta_{1, t}}(\bm{w}_t - \bm{w}_{t - 1}) = \frac{1}{1 - \beta_{1,t}}\bm{w}_t - \frac{\beta_{1,t}}{1 - \beta_{1,t}}\bm{w}_{t-1}.
		\label{eq:th2_ut}
	\end{equation}
	Hence, we have
	\begin{equation}
		\begin{aligned}
			\bm{u}_{t + 1} - \bm{u}_t = & \left(\frac{1}{1 - \beta_{1, t+1}} - \frac{1}{1 - \beta_{1, t}}\right)\bm{w}_{t+1} - \left(\frac{\beta_{1, t+1}}{1 - \beta_{1, t+1}} - \frac{\beta_{1, t}}{1 - \beta_{1, t}}\right)\bm{w}_t \\
			& + \frac{1}{1 - \beta_{1,t}}(\bm{w}_{t + 1} - \bm{w}_t) -\frac{\beta_{1,t}}{1 - \beta_{1,t}}(\bm{w}_t - \bm{w}_{t -1}) \\
			= &  \left(\frac{1}{1 - \beta_{1, t+1}} - \frac{1}{1 - \beta_{1, t}}\right)(\bm{w}_t - \hat{\alpha}_t\frac{\bm{m}_t}{\bm{v}_t}) - \left(\frac{\beta_{1, t+1}}{1 - \beta_{1, t+1}} - \frac{\beta_{1, t}}{1 - \beta_{1, t}}\right)\bm{w}_t \\
			& -\frac{\hat{\alpha}_t}{1 - \beta_{1,t}}\frac{\bm{m}_t}{\bm{v}_t} + \frac{\beta_{1,t}\hat{\alpha}_{t - 1}}{1 - \beta_{1,t}}\frac{\bm{m}_{t-1}}{\bm{v}_{t-1}}\\
			= & \left(\frac{1}{1 - \beta_{1, t}} - \frac{1}{1 - \beta_{1, t+1}}\right)\hat{\alpha}_t\frac{\bm{m}_t}{\bm{v}_t} -\frac{\hat{\alpha}_t}{1 - \beta_{1,t}}\left(\beta_{1,t}\frac{\bm{m}_{t-1}}{\bm{v}_t} + (1 - \beta_{1,t})\frac{\bm{g}_t}{\bm{v}_t}\right) + \frac{\beta_{1,t}\hat{\alpha}_{t - 1}}{1 - \beta_{1,t}}\frac{\bm{m}_{t-1}}{\bm{v}_{t-1}} \\
			= & \left(\frac{1}{1 - \beta_{1, t}} - \frac{1}{1 - \beta_{1, t+1}}\right)\hat{\alpha}_t\frac{\bm{m}_t}{\bm{v}_t} + \frac{\beta_{1,t}}{1 - \beta_{1,t}}\left(\frac{\hat{\alpha}_{t-1}}{\bm{v}_{t-1}} - \frac{\hat{\alpha}_t}{\bm{v}_t}\right)\bm{m}_{t - 1} - \hat{\alpha}_t\frac{\bm{g}_t}{\bm{v}_t}. 
		\end{aligned}
	\label{eq:th2_1}
	\end{equation}
	By assumption 1 and \Eqref{eq:th2_1}, we have
	\begin{equation}
		\begin{aligned}
			f(\bm{u}_{t+1}) \leq & f(\bm{u}_t) + \left<\nabla{}f(\bm{u}_t), \bm{u}_{t+1} - \bm{u}_t\right> + \frac{L}{2}\|\bm{u}_{t+1} - \bm{u}_t\|^2 \\
			 = & f(\bm{u}_t) + \left<\nabla{}f(\bm{w}_t), \bm{u}_{t+1} - \bm{u}_t\right> + \left<\nabla{}f(\bm{u}_t) - \nabla{}f(\bm{w}_t), \bm{u}_{t+1} - \bm{u}_t\right> + \frac{L}{2}\|\bm{u}_{t+1} - \bm{u}_t\|^2\\
			 = & f(\bm{u}_t) + \left<\nabla{}f(\bm{w}_t), \left(\frac{1}{1 - \beta_{1, t}} - \frac{1}{1 - \beta_{1, t+1}}\right)\hat{\alpha}_t\frac{\bm{m}_t}{\bm{v}_t}\right> + \frac{\beta_{1,t}}{1 - \beta_{1,t}}\left<\nabla{}f(\bm{w}_t), \left(\frac{\hat{\alpha}_{t-1}}{\bm{v}_{t-1}} - \frac{\hat{\alpha}_t}{\bm{v}_t}\right)\bm{m}_{t - 1}\right>  \\
			& - \hat{\alpha}_t\left<\nabla{}f(\bm{w}_t), \frac{\bm{g}_t}{\bm{v}_t}\right> + \left<\nabla{}f(\bm{u}_t) - \nabla{}f(\bm{w}_t), \bm{u}_{t+1} - \bm{u}_t\right> + \frac{L}{2}\|\bm{u}_{t+1} - \bm{u}_t\|^2.
		\end{aligned}
	\label{eq:th2_2}
	\end{equation}
	Rearranging \Eqref{eq:th2_2} and taking expectation both sides, by assumption 3 and \Eqref{eq:th2_0}, we get
	\begin{equation}
		\begin{aligned}
			\frac{(1 - \beta_{1})^2\hat{\alpha}_t}{2G_{\infty}}\mathbf{E}[\|\nabla{}f(\bm{w}_t)\|^2] \leq & \hat{\alpha}_t\mathbf{E}\left[\left<\nabla{}f(\bm{w}_t), \frac{\nabla{}f(\bm{w}_t)}{\bm{v}_t}\right>\right] \\
			\leq & \mathbf{E}[f(\bm{u}_t) - f(\bm{u}_{t + 1})] + \underbrace{\mathbf{E}\left[\left<\nabla{}f(\bm{w}_t), \left(\frac{1}{1 - \beta_{1, t}} - \frac{1}{1 - \beta_{1, t+1}}\right)\hat{\alpha}_t\frac{\bm{m}_t}{\bm{v}_t}\right>\right]}_{P_1} \\
			& + \frac{\beta_{1,t}}{1 - \beta_{1,t}}\underbrace{\mathbf{E}\left[\left<\nabla{}f(\bm{w}_t), \left(\frac{\hat{\alpha}_{t-1}}{\bm{v}_{t-1}} - \frac{\hat{\alpha}_t}{\bm{v}_t}\right)\bm{m}_{t - 1}\right>\right]}_{P_2}\\
			& + \underbrace{\mathbf{E}\left[\left<\nabla{}f(\bm{u}_t) - \nabla{}f(\bm{w}_t), \bm{u}_{t+1} - \bm{u}_t\right>\right]}_{P_3} + \frac{L}{2}\underbrace{\mathbf{E}\left[\|\bm{u}_{t+1} - \bm{u}_t\|^2\right]}_{P_4}.
		\end{aligned}
	\label{eq:th2_3}
	\end{equation}
	To further bound~\Eqref{eq:th2_3}, we need the following lemma.
	\begin{lemma}
		For the sequence $\{\bm{u}_t\}$ defined as \Eqref{eq:th2_ut}, $\forall{}t\geq 2$, we have
		\begin{equation*}
			\begin{aligned}
				\|\bm{u}_{t+1} - \bm{u}_t\| & \leq \frac{\sqrt{n}G_{\infty}}{\delta}\left(\frac{\hat{\alpha}_t\beta_{1, t}}{(1 - \beta_{1})^3} + \frac{\hat{\alpha}_{t-1}\beta_{1, t}}{(1 - \beta_{1})^2} + \hat{\alpha}_t\right), \\
				\|\bm{u}_{t+1} - \bm{u}_t\|^2 & \leq \frac{3nG_{\infty}^2}{\delta^2}\left(\frac{\hat{\alpha}_t^2\beta_{1, t}^2}{(1 - \beta_{1})^6} + \frac{\hat{\alpha}_{t-1}^2\beta_{1, t}^2}{(1 - \beta_{1})^4} + \hat{\alpha}_t^2\right).
			\end{aligned}
		\end{equation*}
	\label{lem:4}
	\end{lemma}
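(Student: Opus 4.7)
The approach is to start from the identity for $\bm{u}_{t+1}-\bm{u}_t$ already derived in the chain of equalities leading to Eq.~\eqref{eq:th2_1}, which decomposes the step into three additive pieces:
\begin{equation*}
T_1=\left(\tfrac{1}{1-\beta_{1,t}}-\tfrac{1}{1-\beta_{1,t+1}}\right)\hat{\alpha}_t\tfrac{\bm{m}_t}{\bm{v}_t},\quad T_2=\tfrac{\beta_{1,t}}{1-\beta_{1,t}}\left(\tfrac{\hat{\alpha}_{t-1}}{\bm{v}_{t-1}}-\tfrac{\hat{\alpha}_t}{\bm{v}_t}\right)\bm{m}_{t-1},\quad T_3=-\hat{\alpha}_t\tfrac{\bm{g}_t}{\bm{v}_t}.
\end{equation*}
The plan is to bound each $\|T_j\|$ by first obtaining a coordinatewise $\ell_\infty$ bound, then converting to $\ell_2$ through $\|\bm{x}\|\le\sqrt{n}\,\|\bm{x}\|_\infty$, and finally summing via the triangle inequality.

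For $T_1$, I would rewrite the scalar coefficient as $\tfrac{\beta_{1,t}-\beta_{1,t+1}}{(1-\beta_{1,t})(1-\beta_{1,t+1})}$ and, using that $\{\beta_{1,t}\}$ is non-increasing and upper-bounded by $\beta_1$, bound it by $\beta_{1,t}/(1-\beta_1)^2$. Pairing this with $\|\bm{m}_t\|_\infty\le G_\infty/(1-\beta_1)$ (already established in Lemma~\ref{lem:3}) and the pointwise inequality $v_{t,i}\ge\delta$ implied by the definition of $\bm{v}_t$ gives the first contribution to the stated sum. For $T_3$, Assumption~2 together with $v_{t,i}\ge\delta$ immediately yields $\sqrt{n}\,\hat{\alpha}_t G_\infty/\delta$.

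For $T_2$, the key observation is that Lemma~\ref{lem:1} already guarantees $\hat{\alpha}_{t-1}/v_{t-1,i}>\hat{\alpha}_t/v_{t,i}$ coordinatewise, so each component of the bracketed vector is positive and its magnitude is controlled simply by its first summand $\hat{\alpha}_{t-1}/v_{t-1,i}\le\hat{\alpha}_{t-1}/\delta$. Combining with $\|\bm{m}_{t-1}\|_\infty\le G_\infty/(1-\beta_1)$ and $\beta_{1,t}/(1-\beta_{1,t})\le\beta_{1,t}/(1-\beta_1)$ produces the middle term in the stated bound. Assembling the three coordinatewise bounds via the triangle inequality and the $\sqrt{n}$ conversion yields the first inequality of the lemma; the second inequality is then an immediate consequence of the elementary estimate $(a+b+c)^2\le 3(a^2+b^2+c^2)$ applied to the three nonnegative terms.

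The main obstacle is precisely the coordinatewise monotonicity invoked for $T_2$: without Lemma~\ref{lem:1}, one could only apply the triangle inequality to obtain $|\hat{\alpha}_{t-1}/v_{t-1,i}-\hat{\alpha}_t/v_{t,i}|\le\hat{\alpha}_{t-1}/\delta+\hat{\alpha}_t/\delta$, which both doubles the constant and contaminates the term whose target form is determined solely by $\hat{\alpha}_{t-1}$. Everything else amounts to careful bookkeeping of the $(1-\beta_1)$ factors that accumulate from the EMA averaging and the $\beta_{1,t}$ weighting in the auxiliary sequence $\{\bm{u}_t\}$.
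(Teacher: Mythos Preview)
Your approach is essentially identical to the paper's: both start from the three-term decomposition in \eqref{eq:th2_1}, bound each piece via the triangle inequality using $\|\bm{m}_t\|_\infty\le G_\infty/(1-\beta_1)$, a lower bound on $v_{t,i}$, and the coordinatewise monotonicity $\hat{\alpha}_{t-1}/v_{t-1,i}>\hat{\alpha}_t/v_{t,i}$ (so that the bracketed difference in $T_2$ is nonnegative and dominated by its first term), and then obtain the squared bound via $(a+b+c)^2\le 3(a^2+b^2+c^2)$.

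There is one technical slip. The definition $\bm{v}_t=\max(\sqrt{\bm{b}_t},\,\delta\sqrt{1-\beta_2^t})$ only gives $v_{t,i}\ge\delta\sqrt{1-\beta_2^t}\ge\delta\sqrt{1-\beta_2}$, not $v_{t,i}\ge\delta$ as you claim, since $\sqrt{1-\beta_2^t}<1$. The paper's proof uses precisely this weaker lower bound and therefore carries an extra factor $1/\sqrt{1-\beta_2}$ (respectively $1/(1-\beta_2)$) through its displayed inequalities; the lemma \emph{statement} omits that factor, but the downstream bounds in \eqref{eq:th2_ps} do include it. A minor attribution point: the ratio monotonicity you invoke is not Lemma~\ref{lem:1} alone but Lemma~\ref{lem:1} combined with Assumption~4 ($b_{t,i}\le b_{t+1,i}$, hence $v_{t,i}\le v_{t+1,i}$), exactly as assembled in \eqref{eq:th2_0}.
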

	\begin{proof}
		Since $\forall{}t\in[T], \forall{}i\in[n]$, $1 / (1 - \beta_{1, t}) \geq 1 / (1 - \beta_{1, t+1}),\ \bm{v}_t \geq\delta\sqrt{1 - \beta_{2}},\ \hat{\alpha}_{t - 1} / v_{t - 1, i} > \hat{\alpha}_t / v_{t, i}$. By \Eqref{eq:th2_1}, we have
		\begin{equation*}
			\begin{aligned}
				\|\bm{u}_{t+1} - \bm{u}_t\| & \leq \hat{\alpha}_t\frac{\sqrt{n}G_{\infty}}{(1 - \beta_{1})\delta\sqrt{1-\beta_{2}}}\left(\frac{\beta_{1, t} - \beta_{1, t+1}}{(1 - \beta_{1, t})(1 - \beta_{1, t+1})}\right) + \frac{\beta_{1,t}}{1 - \beta_{1,t}}\hat{\alpha}_{t-1}\frac{\sqrt{n}G_{\infty}}{(1 - \beta_{1})\delta\sqrt{1-\beta_{2}}} + \hat{\alpha}_t\frac{\sqrt{n}G_{\infty}}{\delta\sqrt{1-\beta_{2}}} \\
				& \leq \frac{\sqrt{n}G_{\infty}}{\delta\sqrt{1-\beta_{2}}}\left(\frac{\hat{\alpha}_t\beta_{1, t}}{(1 - \beta_{1})^3} + \frac{\hat{\alpha}_{t-1}\beta_{1, t}}{(1 - \beta_{1})^2} + \hat{\alpha}_t\right), \\
				\|\bm{u}_{t+1} - \bm{u}_t\|^2 & \leq \frac{3nG_{\infty}^2}{\delta^2(1-\beta_{2})}\left(\frac{\hat{\alpha}_t^2\beta_{1, t}^2}{(1 - \beta_{1})^6} + \frac{\hat{\alpha}_{t-1}^2\beta_{1, t}^2}{(1 - \beta_{1})^4} + \hat{\alpha}_t^2\right),
			\end{aligned}
		\end{equation*}
	where the last inequality follows from Cauchy-Schwartz inequality. This completes the proof.
	\end{proof}

	Now we bound $P_1$, $P_2$, $P_3$ and $P_4$ of \Eqref{eq:th2_3}, separately. By assumptions 1, 2, \Eqref{eq:th2_0} and Lemma~\ref{lem:4}, we have
	\begin{equation}
		\begin{aligned}
			P_1 & \leq \hat{\alpha}_t\left(\frac{1}{1 - \beta_{1, t}} - \frac{1}{1 - \beta_{1, t+1}}\right)\mathbf{E}\left[\|\nabla{}f(\bm{w}_t)\|\|\frac{\bm{m}_t}{\bm{v}_t}\|\right] \\
			& \leq \hat{\alpha}_t\left(\frac{\beta_{1, t} - \beta_{1, t+1}}{(1 - \beta_{1, t})(1 - \beta_{1, t+1})}\right)\frac{nG_{\infty}^2}{\delta\sqrt{1-\beta_{2}}(1 - \beta_{1})} \leq \frac{nG_{\infty}^2}{(1 - \beta_{1})^3\delta\sqrt{1-\beta_{2}}}\hat{\alpha}_{t}(\beta_{1, t} - \beta_{1, t+1}), \\
			P_2 & = \mathbf{E}\left[\sum_{i=1}^{n}\nabla_{i}f(\bm{w}_t)m_{t-1, i}(\frac{\hat{\alpha}_{t-1}}{v_{t-1,i}} - \frac{\hat{\alpha}_t}{v_{t,i}})\right] \leq \frac{G_{\infty}^2}{1 - \beta_{1}}\sum_{i=1}^{n}(\frac{\hat{\alpha}_{t-1}}{v_{t-1,i}} - \frac{\hat{\alpha}_t}{v_{t,i}}), \\
			P_3 & \leq \mathbf{E}\left[\|\nabla{}f(\bm{u}_t) - \nabla{}f(\bm{w}_t)\|\|\bm{u}_{t+1} - \bm{u}_t\|\right] \leq L\mathbf{E}\left[\|\bm{u}_t - \bm{w}_t\|\|\bm{u}_{t+1} - \bm{u}_t\|\right] \\
			& = L\hat{\alpha}_{t-1}\frac{\beta_{1, t}}{1 - \beta_{1, t}}\mathbf{E}\left[\|\frac{\bm{m}_{t - 1}}{\bm{v}_{t - 1}}\|\|\bm{u}_{t + 1} - \bm{u}_{t}\|\right] \leq \frac{LnG_{\infty}^2}{(1 - \beta_{1})^2\delta^2(1-\beta_{2})}\left(\frac{\hat{\alpha}_{t-1}\hat{\alpha}_{t}\beta_{1, t}^2}{(1 - \beta_{1})^3} + \frac{\hat{\alpha}_{t-1}^2\beta_{1, t}^2}{(1 - \beta_{1})^2} + \hat{\alpha}_{t-1}\hat{\alpha}_t\beta_{1, t}\right)  \\
			& < \frac{LnG_{\infty}^2}{(1 - \beta_{1})^2\delta^2(1-\beta_{2})}\left(\frac{\hat{\alpha}_{t-1}^2}{(1 - \beta_{1})^3} + \frac{\hat{\alpha}_{t-1}^2}{(1 - \beta_{1})^2} + \hat{\alpha}_{t-1}^2\right) < \frac{3LnG_{\infty}^2}{(1 - \beta_{1})^5\delta^2(1-\beta_{2})}\hat{\alpha}_{t - 1}^2, \\
			P_4 & \leq \frac{3nG_{\infty}^2}{\delta^2(1-\beta_{2})}\left(\frac{\hat{\alpha}_t^2\beta_{1, t}^2}{(1 - \beta_{1})^6} + \frac{\hat{\alpha}_{t-1}^2\beta_{1, t}^2}{(1 - \beta_{1})^4} + \hat{\alpha}_t^2\right) < \frac{3nG_{\infty}^2}{\delta^2(1-\beta_{2})}\left(\frac{\hat{\alpha}_t^2}{(1 - \beta_{1})^6} + \frac{\hat{\alpha}_{t-1}^2}{(1 - \beta_{1})^4} + \hat{\alpha}_t^2\right) \\
			& < \frac{9nG_{\infty}^2}{(1 - \beta_{1})^6\delta^2(1-\beta_{2})}\hat{\alpha}_{t - 1}^2.
		\end{aligned}
	\label{eq:th2_ps}
	\end{equation}
	Replacing $P_1$, $P_2$, $P_3$ and $P_4$ of \Eqref{eq:th2_3} with \Eqref{eq:th2_ps} and telescoping~\Eqref{eq:th2_3} for $t = 2$ to $T$, we have
	\begin{equation}
		\begin{aligned}
			&\sum_{t = 2}^{T}\frac{(1 - \beta_{1})^2\hat{\alpha}_t}{2G_{\infty}}\mathbf{E}\left[\|\nabla{}f(\bm{w}_t)\|^2\right] <  \mathbf{E}\left[f(\bm{u}_2) - f(\bm{u}_{T+1})\right] + \frac{nG_{\infty}^2}{(1 - \beta_{1})^3\delta\sqrt{1-\beta_{2}}}\sum_{t = 2}^{T}\hat{\alpha}_{t}(\beta_{1, t} - \beta_{1, t+1}) \\
			& + \frac{\beta_{1}G_{\infty}^2}{(1 - \beta_{1})^2}\sum_{i = 1}^{n}\left(\frac{\hat{\alpha}_1}{v_{1, i}} - \frac{\hat{\alpha}_T}{v_{T, i}}\right) + \frac{3LnG_{\infty}^2}{(1 - \beta_{1})^5\delta^2(1-\beta_{2})}\sum_{t = 2}^{T}\hat{\alpha}_{t - 1}^2 + \frac{9LnG_{\infty}^2}{2(1 - \beta_{1})^6\delta^2(1-\beta_{2})}\sum_{t = 2}^{T}\hat{\alpha}_{t - 1}^2 \\
			< & \mathbf{E}\left[f(\bm{u}_2)\right] - f(\bm{w}^*)  + \frac{nG_{\infty}^2}{(1 - \beta_{1})^3\delta\sqrt{1-\beta_{2}}}\sum_{t = 1}^{T}\hat{\alpha}_{t}(\beta_{1, t} - \beta_{1, t + 1}) + \frac{\alpha\beta_{1}nG_{\infty}^2}{(1 - \beta_{1})^3\delta} \\
			& + \frac{15LnG_{\infty}^2}{2(1 - \beta_{1})^6\delta^2(1-\beta_{2})}\sum_{t = 1}^{T}\hat{\alpha}_{t}^2.
		\end{aligned}
	\label{eq:th2_4}
	\end{equation}
	Since
	\begin{equation}
		\begin{aligned}
			\sum_{t = 2}^{T}\hat{\alpha}_t = & \sum_{t = 2}^{T}\frac{\alpha}{\sqrt{t}}\frac{\sqrt{1 - \beta_{2}^t}}{1 - \beta_{1, t}^t} \geq \alpha\sqrt{1 - \beta_{2}}\sum_{t = 2}^{T}\frac{1}{\sqrt{t}} \\
			= & \alpha\sqrt{1 - \beta_{2}}\left(\int_{2}^{3}\frac{1}{\sqrt{2}}ds + \cdots + \int_{T - 1}^{T}\frac{1}{\sqrt{T}}ds\right) > \alpha\sqrt{1 - \beta_{2}}\int_{2}^{T}\frac{1}{\sqrt{s}}ds \\
			= & 2\alpha\sqrt{1 - \beta_{2}}\left(\sqrt{T} - \sqrt{2}\right), \\
			\sum_{t = 1}^{T}\hat{\alpha}_t^2 = & \sum_{t = 1}^{T}\frac{\alpha^2}{t}\frac{1 - \beta_{2}^t}{(1 - \beta_{1, t})^2} \leq \frac{\alpha^2}{(1 - \beta_{1})^2}\sum_{t = 1}^{T}\frac{1}{t} \\
			= & \frac{\alpha^2}{(1 - \beta_{1})^2}\left(1 + \int_{2}^{3}\frac{1}{2}ds + \cdots + \int_{T-1}^{T}\frac{1}{T}ds\right) < \frac{\alpha^2}{(1 - \beta_{1})^2}\left(1 + \int_{2}^{T}\frac{1}{s - 1}ds\right) \\
			= & \frac{\alpha^2}{(1 - \beta_{1})^2}\left(\log(T - 1) + 1 \right) < \frac{\alpha^2}{(1 - \beta_{1})^2}\left(\log{}T + 1 \right),\\
			\mathbf{E}\left[f(\bm{u}_2)\right] \leq & f(\bm{w}_1) + \mathbf{E}\left[\left<\nabla{}f(\bm{w}_1), \bm{u}_2 - \bm{w}_1\right>\right] + \frac{L}{2}\mathbf{E}\left[\|\bm{u}_2 - \bm{w}_1\|^2\right] \\
			= & f(\bm{w}_1) - \frac{\hat{\alpha}_1}{1 - \beta_{1, 2}}\mathbf{E}\left[\left<\nabla{}f(\bm{w}_1), \frac{\bm{m}_1}{\bm{v}_1}\right>\right] + \frac{L\hat{\alpha}_1^2}{2(1 - \beta_{1, 2})^2}\mathbf{E}\left[\|\frac{\bm{m}_1}{\bm{v}_1}\|^2\right] \\
			\leq & f(\bm{w}_1) + \frac{\alpha\sqrt{1-\beta_{2}}}{(1 - \beta_{1})^2}\mathbf{E}\left[\|\nabla{}f(\bm{w}_1)\|\|\frac{\bm{m}_1}{\bm{v}_1}\|\right] +  \frac{L\alpha^2(1-\beta_{2})}{2(1 - \beta_{1})^4}\mathbf{E}\left[\|\frac{\bm{m}_1}{\bm{v}_1}\|^2\right] \\
			\leq & f(\bm{w}_1) + \frac{\alpha{}nG_{\infty}^2}{(1 - \beta_{1})^2\delta}	+  \frac{L\alpha^2nG_{\infty}^2}{2(1 - \beta_{1})^4\delta^2} \leq f(\bm{w}_1) + \frac{nG_{\infty}^2\alpha}{2(1 - \beta_{1})^4\delta^2}(2\delta + L\alpha),
		\end{aligned}
	\label{eq:th2_5}
	\end{equation}
	substituting \Eqref{eq:th2_5} into \Eqref{eq:th2_4}, we finish the proof.
\end{proof}

\section{Proof of Corollary~\ref{coro:non-convex}}
\label{appendix:non-convex_coro}
\begin{proof}
	Since $\beta_{1, t} = \beta_{1} / \sqrt{t}$, we have
	\begin{equation}
		\begin{aligned}
			\sum_{t=1}^{T}\hat{\alpha}_t(\beta_{1, t} - \beta_{1, t+1}) \leq \sum_{t=1}^{T}\hat{\alpha}_t\beta_{1, t} = \sum_{t=1}^{T}\frac{\alpha}{\sqrt{t}}\frac{\sqrt{1 - \beta_{2}^t}}{1 - \beta_{1, t}^t}\beta_{1, t} < \frac{\alpha}{1 - \beta_{1}}\sum_{t=1}^{T}\frac{1}{t} < \frac{\alpha}{1 - \beta_{1}}(\log{T} + 1).
		\end{aligned}
	\label{eq:coro2_1}
	\end{equation}
	Substituting~\Eqref{eq:coro2_1} into~\Eqref{eq:th2_final}, we have
	\begin{equation*}
		\begin{aligned}
			& \min_{t\in[T]}\mathbf{E}\left[\|\nabla{}f(\bm{w}_t)\|^2\right] <  \frac{G_{\infty}}{\alpha(1 - \beta_{1})^2(1 - \beta_{2})^2}\bigg( f(\bm{w}_1) - f(\bm{w}^*) + \frac{nG_{\infty}^2\alpha}{(1 - \beta_{1})^8\delta^2}(2\delta + 8L\alpha) \\
			&+ \frac{\alpha\beta_{1}nG_{\infty}^2}{(1 - \beta_{1})^3\delta} \bigg) \frac{1}{\sqrt{T} - \sqrt{2}} + \frac{nG_{\infty}^3}{(1 - \beta_{2})^2(1 - \beta_{1})^{10}\delta^2}\left(\frac{15}{2}L\alpha + \delta\right)\frac{\log{T}}{\sqrt{T} - \sqrt{2}}.
		\end{aligned}
	\end{equation*}
	This completes the proof.
\end{proof}

\section{Proof of Theorem~\ref{th:convex}}
\label{appendix:convex}
\begin{proof}
	Denote $\hat{\alpha}_t = \alpha_t \frac{\sqrt{1-\beta_{2}^t}}{1-\beta_{1,t}^t}$ and $\bm{v}_t = \max(\sqrt{\bm{b}_t}, \delta\sqrt{1-\beta_{2}^t})$, then
	\begin{equation}
		\begin{aligned}
			\|\sqrt{\bm{v}_t}(\bm{w}_{t+1} - \bm{w}^*)\|^2  &= \|\sqrt{\bm{v}_t}(\bm{w}_t - \hat{\alpha}_t\frac{\bm{m}_t}{\bm{v}_t} - \bm{w}^*)\|^2 = \|\sqrt{\bm{v}_t}(\bm{w}_t - \bm{w}^*)\|^2 - 2\hat{\alpha}_t\left<\bm{w}_t - \bm{w}^*, \bm{m}_t\right> + \hat{\alpha}_t^2\|\frac{\bm{m}_t}{\sqrt{\bm{v}_t}}\|^2 \\
			&= \|\sqrt{\bm{v}_t}(\bm{w}_t - \bm{w}^*)\|^2 + \hat{\alpha}_t^2\|\frac{\bm{m}_t}{\sqrt{\bm{v}_t}}\|^2 - 2\hat{\alpha}_t\beta_{1,t}\left<\bm{w}_t - \bm{w}^*, \bm{m}_{t-1}\right> -2\hat{\alpha}_t(1-\beta_{1,t})\left<\bm{w}_t - \bm{w}^*, \bm{g}_t\right>.
		\end{aligned}
		\label{eq:th1_1}
	\end{equation}
	Rearranging~\Eqref{eq:th1_1}, we have
	\begin{equation*}
		\begin{aligned}
			\left<\bm{w}_t - \bm{w}^*, \bm{g}_t\right> = & \frac{1}{1-\beta_{1, t}}\left[\frac{1}{2\hat{\alpha}_t}(\|\sqrt{\bm{v}_t}(\bm{w}_t - \bm{w}^*)\|^2 - \|\sqrt{\bm{v}_t}(\bm{w}_{t+1} - \bm{w}^*)\|^2) - \beta_{1,t}\left<\bm{w}_t - \bm{w}^*, \bm{m}_{t-1}\right> + \frac{\hat{\alpha}_t}{2}\|\frac{\bm{m}_t}{\sqrt{\bm{v}_t}}\|^2\right]\\
			\leq & \frac{1}{1 - \beta_{1}}\bigg[\frac{1}{2\hat{\alpha}_t}(\|\sqrt{\bm{v}_t}(\bm{w}_t - \bm{w}^*)\|^2 - \|\sqrt{\bm{v}_t}(\bm{w}_{t+1} - \bm{w}^*)\|^2) + \frac{\beta_{1,t}}{2\hat{\alpha}_t}\|\bm{w}_t - \bm{w}^*\|^2 \\
			& + \frac{\beta_{1, t}\hat{\alpha}_t}{2}\|\bm{m}_{t-1}\|^2 + \frac{\hat{\alpha}_t}{2}\|\frac{\bm{m}_t}{\sqrt{\bm{v}_t}}\|^2\bigg],
		\end{aligned}
	\end{equation*}
	where the first inequality follows from Cauchy-Schwartz inequality and $ab\leq\frac{1}{2}(a^2 + b^2)$. Hence, the regret
	\begin{equation}
		\begin{aligned}
			\sum_{t=1}^{T}\left(f_t(\bm{w}_t) - f_t(\bm{w}^*)\right) \leq & \sum_{t=1}^{T}\left<\bm{w}_t - \bm{w}^*, \bm{g}_t\right> \\
			\leq & \frac{1}{1 - \beta_{1}}\sum_{t=1}^{T}\bigg[\frac{1}{2\hat{\alpha}_t}(\|\sqrt{\bm{v}_t}(\bm{w}_t - \bm{w}^*)\|^2 - \|\sqrt{\bm{v}_t}(\bm{w}_{t+1} - \bm{w}^*)\|^2) + \frac{\beta_{1,t}}{2\hat{\alpha}_t}\|\bm{w}_t - \bm{w}^*\|^2 \\
			& + \frac{\beta_{1, t}\hat{\alpha}_t}{2}\|\bm{m}_{t-1}\|^2 + \frac{\hat{\alpha}_t}{2}\|\frac{\bm{m}_t}{\sqrt{\bm{v}_t}}\|^2\bigg],
		\end{aligned}
		\label{eq:th1_2}
	\end{equation}
	where the first inequality follows from the convexity of $f_t(\bm{w})$. For further bounding~\Eqref{eq:th1_2}, we need the following lemma.

	\begin{lemma}
		For the parameter settings and conditions assumed in Theorem~\ref{th:convex}, we have
		\begin{equation*}
			\sum_{t=1}^{T}\hat{\alpha}_t\|\bm{m}_t\|^2 < \frac{2n\alpha{}G_{\infty}^2}{(1 - \beta_{1})^3}\sqrt{T}.
		\end{equation*}
		\label{lem:2}
	\end{lemma}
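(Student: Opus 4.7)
The plan is to factor the summand $\hat{\alpha}_t\|\bm{m}_t\|^2$ into two independent $t$-dependent pieces, bound each uniformly, and then reduce to the standard harmonic-type tail sum $\sum_{t=1}^{T} t^{-1/2}$.

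First I would control $\|\bm{m}_t\|^2$ via its $\ell_\infty$ norm. Unrolling the EMA recursion with time-varying decay gives, for every coordinate $i$,
\[
|m_{t,i}| = \Big|\sum_{k=1}^{t}(1-\beta_{1,k})\Big(\prod_{j=k+1}^{t}\beta_{1,j}\Big)g_{k,i}\Big| \leq \sum_{k=1}^{t}\beta_{1}^{t-k}|g_{k,i}| \leq \frac{G_\infty}{1-\beta_1},
\]
where the first inequality uses $\beta_{1,t}\leq\beta_1$ together with $(1-\beta_{1,k})\prod_{j>k}\beta_{1,j}\leq \beta_1^{t-k}$, and the second uses assumption~2. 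Hence $\|\bm{m}_t\|^2 \leq n\,\|\bm{m}_t\|_\infty^2 \leq nG_\infty^2/(1-\beta_1)^2$, uniformly in $t$.

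Next I would bound $\hat{\alpha}_t$. Since $\beta_2\in[0,1)$ we have $\sqrt{1-\beta_2^t}\leq 1$, and since $\beta_{1,t}\leq\beta_1$ we get $\beta_{1,t}^t\leq\beta_1^t\leq\beta_1$, so $1-\beta_{1,t}^t\geq 1-\beta_1$. Combined with $\alpha_t=\alpha/\sqrt{t}$ this gives
\[
\hat{\alpha}_t \;=\; \frac{\alpha}{\sqrt{t}}\cdot\frac{\sqrt{1-\beta_2^t}}{1-\beta_{1,t}^t} \;\leq\; \frac{\alpha}{(1-\beta_1)\sqrt{t}}.
\]
Multiplying the two bounds yields $\hat{\alpha}_t\|\bm{m}_t\|^2 \leq \frac{n\alpha G_\infty^2}{(1-\beta_1)^3}\,t^{-1/2}$.

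Finally I would sum over $t$ and use the elementary integral comparison
\[
\sum_{t=1}^{T}\frac{1}{\sqrt{t}} \;\leq\; 1+\int_{1}^{T}\!\frac{ds}{\sqrt{s}} \;=\; 2\sqrt{T}-1 \;<\; 2\sqrt{T},
\]
giving the claimed bound $\sum_{t=1}^{T}\hat{\alpha}_t\|\bm{m}_t\|^2 < \frac{2n\alpha G_\infty^2}{(1-\beta_1)^3}\sqrt{T}$. There is no substantive obstacle here: the lemma is a routine decoupling estimate. The only point that requires a little care is the denominator $1-\beta_{1,t}^t$ (the $t$-th power of a time-varying decay) rather than $1-\beta_{1,t}$, but the monotonicity $\beta_{1,t}\leq\beta_1$ collapses it to the clean factor $1-\beta_1$ as above.
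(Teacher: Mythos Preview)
Your proof is correct and follows essentially the same route as the paper: unroll the EMA to bound $\|\bm{m}_t\|$ uniformly by $G_\infty/(1-\beta_1)$, bound $\hat{\alpha}_t\le \alpha/\big((1-\beta_1)\sqrt{t}\big)$, and sum $\sum_{t=1}^{T}t^{-1/2}<2\sqrt{T}$. The only cosmetic difference is that the paper applies Cauchy--Schwarz coordinate-wise to $\big(\sum_i g_{t-i+1,j}\beta_1^{i-1}\big)^2$ before invoking the gradient bound, whereas you use the triangle inequality directly on $\|\bm{m}_t\|_\infty$; both collapse to the same intermediate estimate.
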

	\begin{proof}
		From~\Eqref{eq:mt}, we have
		\begin{equation*}
			\begin{aligned}
				\hat{\alpha}_t\|\bm{m}_t\|^2 &= \hat{\alpha}_t\|\sum_{i=1}^{t}(1 - \beta_{1, t-i+1})\bm{g}_{t-i+1}\prod_{j=1}^{i - 1}\beta_{1, t - j + 1}\|^2 \leq \hat{\alpha}_t\|\sum_{i=1}^{t}\bm{g}_{t-i+1}\beta_{1}^{i - 1}\|^2 \\
				&= \hat{\alpha}_t\sum_{j=1}^{n}\left(\sum_{i=1}^{t}g_{t-i+1,j}\beta_{1}^{i-1}\right)^2 \leq \hat{\alpha}_t\sum_{j=1}^{n}\left(\sum_{i=1}^{t}g_{t-i+1,j}^2\beta_{1}^{i-1}\right)\left(\sum_{i=1}^{t}\beta_{1}^{i-1}\right) \\
				&< \frac{\alpha}{\sqrt{t}}\frac{1}{1-\beta_{1, t}} \frac{nG_{\infty}^2}{(1 - \beta_{1})^2} \leq \frac{n\alpha{}G_{\infty}^2}{(1 - \beta_{1})^3}\frac{1}{\sqrt{t}},
			\end{aligned}
		\end{equation*}
		where the second inequality follows from Cauchy-Schwartz inequality. Therefore,
		\begin{equation*}
			\sum_{t=1}^{T}\hat{\alpha}_t\|\bm{m}_t\|^2 < \frac{n\alpha{}G_{\infty}^2}{(1 - \beta_{1})^3}\sum_{t=1}^{T}\frac{1}{\sqrt{t}} < \frac{2n\alpha{}G_{\infty}^2\sqrt{T}}{(1 - \beta_{1})^3},
		\end{equation*}
		where the last inequality follows from
		\begin{equation*}
			\begin{aligned}
				\sum_{t=1}^{T}\frac{1}{\sqrt{t}} &= 1 + \int_{2}^{3}\frac{1}{\sqrt{2}}ds + \cdots + \int_{T-1}^{T}\frac{1}{\sqrt{T}}ds \\
				& < 1 + \int_{2}^{3}\frac{1}{\sqrt{s - 1}}ds + \cdots + \int_{T-1}^{T}\frac{1}{\sqrt{s - 1}}ds \\
				& = 1 + \int_{2}^{T}\frac{1}{\sqrt{s - 1}}ds = 2\sqrt{T-1} - 1 < 2\sqrt{T}.
			\end{aligned}
		\end{equation*}
		This completes the proof.
	\end{proof}

	Now we return to the proof of Theorem~\ref{th:convex}. Let $\hat{\alpha}_0 := \hat{\alpha}_1$. By Lemma~\ref{lem:1}, Lemma~\ref{lem:3}, Lemma~\ref{lem:2}, \Eqref{eq:th1_2} and the third inequality of \Eqref{eq:th2_0}, we have
	\begin{equation*}
		\begin{aligned}
			\sum_{t=1}^{T}\left(f_t(\bm{w}_t) - f_t(\bm{w}^*)\right) \leq & \frac{1}{1-\beta_{1}}\bigg[\frac{1}{2\hat{\alpha}_1}\|\sqrt{\bm{v}_1}(\bm{w}_1 - \bm{w}^*)\|^2 + \sum_{t=2}^{T}\left(\frac{1}{2\hat{\alpha}_t}\|\sqrt{\bm{v}_t}(\bm{w}_t - \bm{w}^*)\|^2 - \frac{1}{2\hat{\alpha}_{t-1}}\|\sqrt{\bm{v}_{t-1}}(\bm{w}_t - \bm{w}^*)\|^2\right) \\
			& + \sum_{t=1}^{T}\frac{\beta_{1, t}}{2\hat{\alpha}_t}\|\bm{w}_t - \bm{w}^*\|^2 + \sum_{t=1}^{T}\left(\frac{\hat{\alpha}_{t-1}}{2}\|\bm{m}_{t-1}\|^2 + \frac{\hat{\alpha}_t}{2}\|\frac{\bm{m}_t}{\sqrt{\bm{v}_t}}\|^2\right)\bigg] \\
			\leq & \frac{1}{1-\beta_{1}}\bigg[\frac{D_{\infty}^2}{2\hat{\alpha}_1}\|\sqrt{\bm{v}_1}\|^2 + \sum_{t=2}^{T} D_{\infty}^2\left(\frac{\|\sqrt{\bm{v}_t}\|^2}{2\hat{\alpha}_t} - \frac{\|\sqrt{\bm{v}_{t-1}}\|^2}{2\hat{\alpha}_{t-1}}\right) + \sum_{t=1}^{T}\frac{\beta_{1, t}}{2\hat{\alpha}_t}nD_{\infty}^2 \\
			& + \sum_{t=1}^{T}\hat{\alpha}_t\left(\frac{1}{2} + \frac{1}{2\delta\sqrt{1 - \beta_{2}}}\right)\|\bm{m}_t\|^2\bigg] \\
			= & \frac{1}{1-\beta_{1}}\bigg[D_{\infty}^2\frac{\|\sqrt{\bm{v}_T}\|^2}{2\alpha_T} + \sum_{t=1}^{T}\frac{\beta_{1, t}}{2\hat{\alpha}_t}nD_{\infty}^2 + \sum_{t=1}^{T}\hat{\alpha}_t\left(\frac{1}{2} + \frac{1}{2\delta\sqrt{1 - \beta_{2}}}\right)\|\bm{m}_t\|^2\bigg] \\
			< & \frac{1}{1-\beta_{1}}\bigg[\frac{n(2G_{\infty} + \delta)D_{\infty}^2}{2\alpha\sqrt{1-\beta_{2}}(1-\beta_{1})^2}\sqrt{T} + \sum_{t=1}^{T}\frac{\beta_{1, t}}{2\hat{\alpha}_t}nD_{\infty}^2 + \frac{n\alpha{}G_{\infty}^2}{(1 - \beta_{1})^3}\left(1 + \frac{1}{\delta\sqrt{1 - \beta_{2}}}\right)\sqrt{T}\bigg].
		\end{aligned}
	\end{equation*}
	This completes the proof.
\end{proof}

\section{Proof of Corollary~\ref{coro:convex}}
\label{appendix:convex_coro}
\begin{proof}
	Since $\beta_{1, t} = \beta_{1} / t$, we have
	\begin{equation*}
		\sum_{t=1}^{T}\frac{\beta_{1, t}}{2\hat{\alpha}_t} = \sum_{t=1}^{T}\frac{(1 - \beta_{1, t}^t)\sqrt{t}\beta_{1, t}}{2\alpha\sqrt{1 - \beta_{2}^t}} < \sum_{t=1}^{T}\frac{\sqrt{t}\beta_{1, t}}{2\alpha\sqrt{1-\beta_{2}}} = \frac{\beta_{1}}{2\alpha\sqrt{1-\beta_{2}}}\sum_{t=1}^{T}\frac{1}{\sqrt{t}} < \frac{\beta_{1}}{\alpha\sqrt{1-\beta_{2}}}\sqrt{T}.
	\end{equation*}
	This completes the proof.
\end{proof}

\end{document}